\documentclass[]{fairmeta}
\usepackage{amsfonts}
\usepackage{amssymb}
\usepackage[english]{babel}
\usepackage{amsthm}
\usepackage{amsmath}
\usepackage{array}
\usepackage{multirow}
\usepackage{subcaption}
\usepackage{booktabs}
\usepackage{tabularx}
\usepackage{xcolor}     % For text color

\newtheorem{theorem}{Theorem}
\newtheorem{lemma}[theorem]{Lemma}

\title{Reward Models Are Secretly Value Functions: Temporally Coherent Reward Modeling}

\author[1]{Alex Nikulkov}

\affiliation[1]{AI at Meta}

\abstract{Reward models in RLHF are trained to score only the final token of a response---a choice that discards rich signal from every intermediate position and produces models whose token-level outputs are noise.
We argue this is a missed opportunity: a well-trained reward model's output at any token should represent the conditional expectation of the final reward given the response so far.
We introduce Temporally Coherent Reward Modeling (TCRM), which induces this property via two regularization terms on top of the standard Bradley-Terry loss, with minimizers provably equal to conditional expectations.
The regularizers correspond to Monte Carlo and TD value-learning objectives, establishing a direct connection to RL value functions.
TCRM requires zero changes to architecture, data, or inference, yet unlocks three capabilities from one principle: interpretable token-level reward trajectories (middle-token pairwise accuracy improved from 50\% to 88.9\%, final-token accuracy preserved); state-of-the-art PRM performance on ProcessBench (44.9\% average F1) among models trained only on outcome data; and unified reward/value modeling in PPO, reducing peak GPU memory by 27\% and step time by 19\% with matching LLM quality.
}

\date{\today}
\correspondence{Alex Nikulkov at \email{alexnik@meta.com}}

\begin{document}

\maketitle

\section{Introduction}
\label{section:intro}

Reinforcement Learning from Human Feedback (RLHF) has become the cornerstone for aligning Large Language Models (LLMs) with human values and intentions \cite{christiano2017rlhf,ouyang2022instructgpt,bai2022anthropichh,dubey2024llama3,liu2024deepseekv3}. At the heart of this process lies the reward model (RM), a crucial component used to evaluate the quality of a response to a prompt. In the most typical RLHF setup it serves as a proxy for human preference, guiding the LLM's output distribution toward generating more helpful and harmless content.

The standard approach to training these models is effective yet surprisingly inefficient. Typically, a decoder-only transformer architecture is fine-tuned using a pairwise preference loss, such as the Bradley-Terry model \cite{bradley1952bradleyterry}. This loss is calculated based only on the model's output at the final token of each sequence (e.g., the <EOS> token). The rich information contained in the outputs at all intermediate token positions is entirely discarded during training. This results in reward models whose intermediate predictions are noisy, uninterpretable, and uncorrelated with the quality of the partial responses being generated.

In this work, we introduce Temporally Coherent Reward Modeling (TCRM), a new training paradigm designed to make the output of a reward model at every token meaningful. Our central thesis is that the reward model's output at any given token should represent the conditional expectation of the final reward, given the partial response generated so far. By inducing this property, the reward trajectory becomes a smooth and interpretable signal that reflects the evolving quality of the response as each new token is generated.

We achieve this coherence by augmenting the standard Bradley-Terry loss with two regularization terms whose minimizers are provably conditional expectations of the final reward (Lemmas \ref{lem:lookahead_cond_exp} and \ref{lem:smooth_cond_exp}). This single principle---that every intermediate token should predict the final reward---\textbf{requires no changes to model architecture, training data, or inference}, yet unlocks three distinct capabilities:

\begin{enumerate}
    \item \textbf{Interpretability}. TCRM transforms intermediate outputs from noise into a predictive signal, raising middle-token pairwise accuracy from near-chance 50\% to up to 88.9\% while preserving final-token accuracy. This enables token-level credit assignment for debugging and analysis.
    \item \textbf{Process-level evaluation} without step labels. TCRM functions as a highly effective Process Reward Model, achieving 44.9\% average F1 on ProcessBench without requiring the step-level supervision that standard PRMs depend on. 
    \item Unified reward and \textbf{value modeling} in PPO. Because our regularizers correspond structurally to Monte Carlo and TD value-learning objectives, the same TCRM can serve as both reward and value model in PPO, reducing peak GPU memory by 27\% and training step time by 19\% with no loss in LLM quality.
\end{enumerate}

\section{Preliminaries}
\label{section:prelim}

The standard RLHF reward modeling recipe includes a decoder-only transformer model architecture and a pairwise Bradley-Terry loss. The loss is applied to the model outputs which correspond to the final token of each sequence (usually the EOS token). The model outputs from all tokens except last are discarded during training and inference.

Notation:
\begin{itemize}
    \item $x, y$ - prompt and response, respectively. Each of them consists of a sequence of individual tokens: $x=(x_0, x_1, \cdots, x_N), y=(y_0, y_1, \cdots, y_K)$. $y_{0..k}=(y_0, y_1, \cdots, y_k)$ is a partial response, up to the $k$-th token. We assume that the final response  token is always the EOS token: $y_K=$"<EOS>".
    \item $r()$ - decoder-only transformer reward model. $r(x,y)$ is the model output at the final response token, $r(x,y_{0..k})$ is the model output at the $k$-th response token. The model is parameterized by $\theta$, but we drop $\theta$ from the notation for brevity.
    \item $y^w, y^l$ - the preferred (winner) and rejected (loser) responses to the same prompt. The preference labels are usually generated by human annotators and/or a carefully prompted LLM.
\end{itemize}

The Bradley-Terry loss for a single pair of responses is $L_{BT}(x,y^w,y^l)=-\log(\sigma(r(x, y^w) - r(x, y^l)))$. The reward model is trained by minimizing this loss on a fixed dataset.

\section{Intermediate Reward Model Outputs}
\label{section:des}

Meaningful reward model outputs at intermediate tokens can improve the model quality, interpretability, and usefulness. We would like these outputs to represent conditional expectation of the final reward, given the partial response tokens:

\[
r(x, y_{0..k})=E[r(x, y)|x, y_{0..k}]
\]

Ideally we would model $E[R(x, y)|x, y_{0..k}]$, where $R$ is the unobserved ground-truth reward. In pairwise RM training we do not observe $R$, so we use the final model score $r(x,y)$ as a proxy target. Under a standard unbiasedness assumption, this gives the same conditional-expectation form for intermediate predictions.

The expectation is over future tokens $y_{k+1},\cdots,y_K$, so it depends on both the observed prefix and the policy that generates the continuation. In offline training, this continuation distribution is the training-data distribution; during online RL it shifts with the policy. Empirically (Section \ref{sec:cross_dataset}), TCRM's intermediate-token accuracy is no more sensitive to this shift than final-token accuracy and even a frozen TCRM can effectively function as a value model in PPO with no extra fine-tuning (Section \ref{sec:rl}).

Note that conditional expectation captures expected final outcome after continuation, not intrinsic prefix quality. Despite this caveat, intermediate outputs are interpretable in our experiments.

The decoder-only transformer architecture used in LLMs is uniquely suited to represent the conditional expectation due to causal masking applied during model training. As shown in Figure \ref{fig:decoder_attention}, the decoder output at an intermediate token depends only on the tokens at the current and previous positions, which are exactly the tokens on which the expectation is conditioned. This enables us to use a single model for both intermediate and final reward predictions, since they differ only in the sequence of tokens on which they are conditioned.

\begin{figure}
    \centering
    \includegraphics[width=0.5\linewidth]{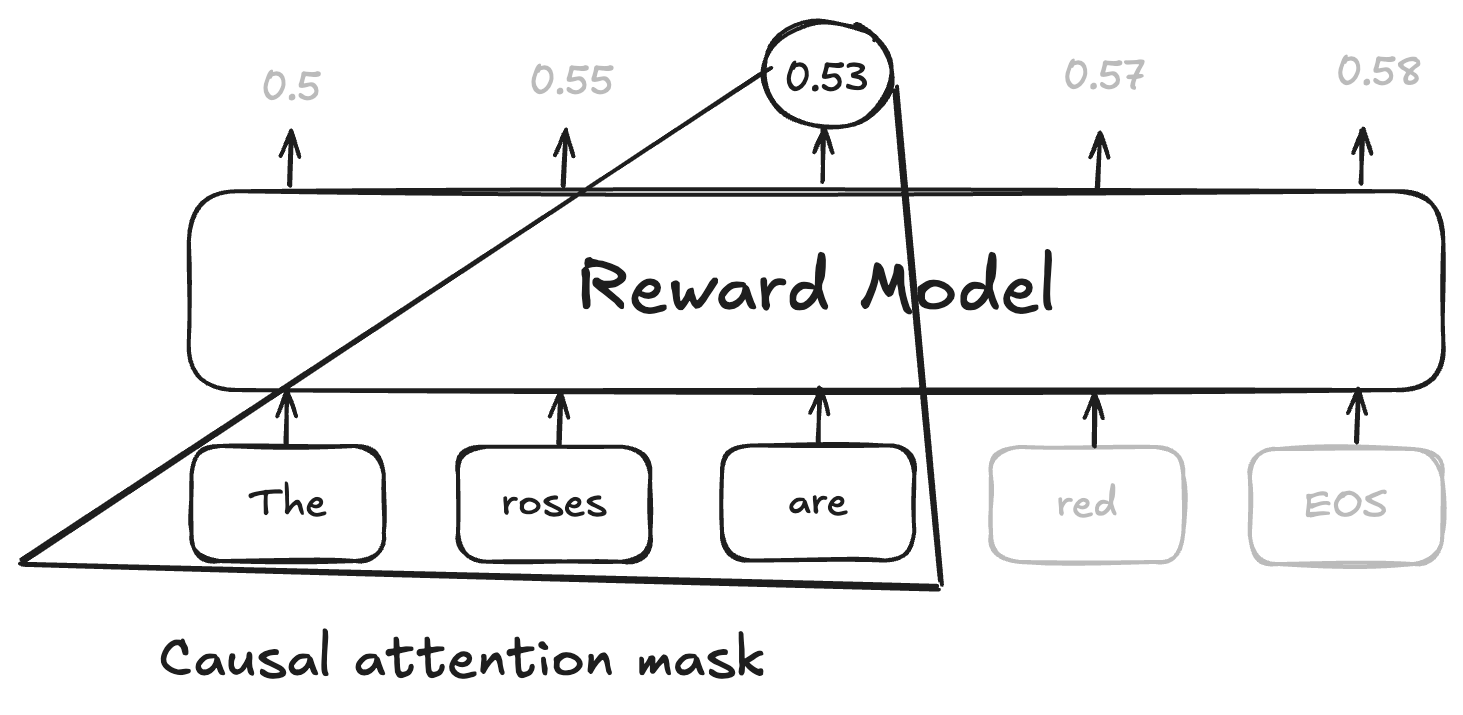}
    \caption{Causal attention mask of a decoder-based reward model.}
    \label{fig:decoder_attention}
\end{figure}

By comparing the model outputs at two adjacent token positions, we can measure the incremental impact of each token on response quality: $d(y_k|x,y_{0..k-1}) = r(x,y_{0..k}) - r(x,y_{0..k-1})$. The final reward model prediction can be decomposed into a sum of token-level incremental contributions: $r(x,y)=r(x,y_0) + \sum_{k=1}^K d(y_k|x,y_{0..k-1})$. Note that the reward of the first output token $r(x,y_0)$ is treated differently, but we can unify its treatment by assuming that the reward of an empty response is 0: $r(x,\{\})=0$, $d(y_0|x,\{\})=r(x,y_0) - r(x,\{\})=r(x,y_0)$.

There are several benefits from having coherent intermediate reward model outputs:
\begin{enumerate}
    \item \textbf{Interpretability}. $d(y_k|x,y_{0..k-1})$ helps explain final reward predictions and diagnose model behavior. For example, a safety-focused reward model should assign negative increments to unsafe content; unexpected increments can reveal training issues.
    \item \textbf{Consistency across tokens}. Standard reward models are largely incoherent at intermediate tokens and only become meaningful near EOS (Figures \ref{fig:fib_baseline_correct}, \ref{fig:fib_baseline_wrong}, \ref{fig:fib_baseline_extra}). Enforcing consistency enables scoring of partial responses and encourages useful computation throughout the sequence, which might also improve final-token predictions.
    \item \textbf{High-resolution signal for RL}. Meaningful token-level rewards can be used to enable RL with higher granularity compared to a single outcome rewards. While true token-level reward signals might be too noisy, we can aggregate them into longer chunks (e.g. sentence-level or step-level rewards) to improve the signal-to-noise ratio. Token-level rewards can also be used as the value model in PPO training (a consequence of the structural correspondence established in Section~\ref{sec:value-learning-connection}).
\end{enumerate}

\section{Temporally Coherent Regularization}
\label{section:temp_coh_reg}

The standard Bradley-Terry reward modeling recipe produces reward models which lack temporal coherence (meaningless outputs at intermediate tokens), but we can induce the coherence by adding extra regularization terms to the loss. We propose 2 types of regularization terms:
\begin{enumerate}
    \item \textbf{Lookahead consistency}. This term penalizes large differences between model outputs at intermediate and final tokens; its MSE minimizer is the conditional expectation of the final reward given the current prefix (Lemma \ref{lem:lookahead_cond_exp}).
    \item \textbf{Smoothness}. This term encourages local trajectory consistency; under recursive minimization, each step becomes the conditional expectation of the next-step target (Lemma \ref{lem:smooth_cond_exp}).
\end{enumerate}

Crucially, these regularizers operate entirely on the existing reward model outputs---no new heads, auxiliary networks, or additional labels are introduced.

\subsection{Lookahead Consistency}
\label{sec:lookahead_reg}

Using $r(x, y_{0..k})=E[r(x, y)|x, y_{0..k}]$, we define an MSE loss between intermediate and final outputs. By Lemma \ref{lem:lookahead_cond_exp} (formal statement in Appendix), this loss is minimized by the conditional expectation, so it induces lookahead consistency.

The lookahead consistency regularization term is defined as:

\begin{equation}
L_{LA}(x,y) = \sum_{k=0}^{K-1} (r(x,y_{0..k}) - SG[r(x,y)])^2
\label{eq:lookahead_loss}
\end{equation}

To stabilize learning and avoid regressing the quality of the final reward predictions, we detach (stop-gradient) the final reward model output, similar to how the Bellman equation targets are detached in DQN to stabilize Q-learning. See Appendix \ref{sec:ablations} for comparison of loss with and without stop-gradient.

\subsection{Smoothness}
\label{sec:reg_smoothness}

Most tokens in a typical LLM response are just filler tokens and don't substantially impact the quality of the response. We'd like the reward model outputs at adjacent tokens to be similar in the absence of substantial new information in the token. Since reward model training data usually doesn't highlight which parts of the response are substantial, we can add a uniform regularization term which penalizes differences in reward scores of adjacent response tokens regardless of whether there is a substantial information in the token. For non-substantial tokens this should create an incentive for the scores to remain close. For tokens with substantial information there could be a tradeoff between accuracy and trajectory smoothness, but we can manage this tradeoff by tuning the weight on the regularization coefficients. Empirically, we find that smoothness loss helps regularize the model and reduces the noise in token-level predictions.

Lemma \ref{lem:smooth_cond_exp} (formal statement in Appendix) shows that recursive minimization of smoothness yields a Doob martingale: each step is the conditional expectation of the next-step target (hence of final reward). Joint minimization of all squared differences instead gives $X_t^*=\frac{X_{t-1} + E\big[X_{t+1} \mid \mathcal{F}_t \big]}{2}$, so we apply stop-gradient to the next-step reward to better match the recursive setting.

The smoothness regularization term for a single response $y$ of length $K$ to the prompt $x$ is defined as the sum of squared differences along the trajectory:

\begin{equation}
L_{sm}(x,y) = \sum_{k=1}^K (r(x,y_{0..k-1}) - SG[r(x,y_{0..k})])^2
\label{eq:smoothness_loss}
\end{equation}

\subsection{Overall Loss}
\label{sec:overall_loss}

Since each of the losses described in previous subsections has its own pros and cons, we combine them into a weighted sum to amplify their benefits and minimize the downsides:

\begin{multline}
L_{overall}(x,y^w,y^l) = L_{BT}(x,y^w,y^l) + a_{sm}(L_{sm}(x,y^w) + L_{sm}(x,y^l)) + \\
+ a_{LA}(L_{LA}(x,y^w) + L_{LA}(x,y^l))
\end{multline}

\subsection{Connection to Value Learning}
\label{sec:value-learning-connection}

The two regularizers introduced above are not arbitrary choices: they correspond 
structurally to the two canonical objectives used to train value functions in 
reinforcement learning. Making this correspondence explicit clarifies both why 
TCRM works and why the same model can later serve as a value model in PPO 
(Section~\ref{sec:rl}).

Consider a value model $V(x, y_{0..k})$ trained to predict the return from prefix 
$y_{0..k}$. In the RLHF setting, rewards are sparse: the only non-zero reward 
arrives at the final token, so with $\gamma = 1$ the return equals $r(x, y)$ for 
every prefix. The two standard value-learning losses then take the following forms:

\paragraph{Monte Carlo (full-return) target.}
\begin{equation}
\mathcal{L}_{\text{VM-MC}} = \bigl(V(x, y_{0..k}) - r(x, y)\bigr)^2
\end{equation}
Each prefix is regressed directly against the observed final reward. Compare this 
to our lookahead consistency loss \eqref{eq:lookahead_loss}. The two are structurally identical: the reward model's own final-token output 
plays the role of the Monte Carlo target, with stop-gradient ensuring the target 
is not itself a learning signal.

\paragraph{Temporal Difference (bootstrapped) target.}
\begin{equation}
\mathcal{L}_{\text{VM-TD}} = \bigl(V(x, y_{0..k}) - \text{SG}[V(x, y_{0..k+1})]\bigr)^2
\end{equation}
(for non-terminal transitions; at the terminal token the target is the observed 
reward $r(x, y)$). Our smoothness loss \eqref{eq:smoothness_loss} has the same one-step bootstrapping 
structure: each intermediate output is regressed against the next-step output, recovering 
the Bellman consistency that defines a TD value update.

\section{Experiments}
\label{section:exp}

\subsection{Training Details}

We use the \textit{Skywork-Reward-Preference-80K-v0.2} \cite{liu2024skywork} binary preference dataset for model training and evaluation. Since the data are not split into train/test in the original dataset, we split them 90/10. The reward models were initialized with Qwen3 (6 sizes between 0.6B and 32B) and Llama 3.1 8B instruction-tuned checkpoints.
We also trained 2 other types of token-level reward models for comparison: ImplictPRM \cite{yuan2024free} and TC-$\lambda$ \cite{maystre2025incremental} on the same data. Additional training details are listed in Appendix \ref{section:add_train_det}.

\subsection{Metrics}

The following metrics are used to measure the benefits of temporally coherent reward modeling:
\begin{enumerate}
    \item \textbf{Final position pairwise accuracy}. The reward model must maintain high accuracy when evaluating the full response.
    \item \textbf{Middle position pairwise accuracy}. This is a measure of how well the reward model can predict which of the 2 responses is better by looking at just the first half of each response. Improvements in this metric would indicate that the model uses the information from partial responses more effectively.
    \item \textbf{Mean squared step delta}. This is a measure of the "jumpiness" of the reward model predictions at intermediate token positions. It is defined as $\frac{\sum_{k=1}^K (r(x,y_{0..k}) - r(x,y_{0..k-1}))^2}{K}$ and is related to the "smoothness" part of the loss function.
    \item \textbf{Mean squared final delta}. This metric measures how different the intermediate reward model outputs are compared to the final output. It is defined as $\frac{\sum_{k=1}^K (r(x,y_{0..k}) - r(x,y))^2}{K}$ and is related to the "lookahead consistency" part of the loss function.
\end{enumerate}

In our experiments middle position accuracy was the main metric to improve. Final position accuracy could not be improved substantially compared to the baseline model, but we treated it as a guardrail to avoid regressions. The "delta" metrics were secondary and were used mostly for diagnostics and analysis.

\subsection{Temporally Coherent Reward Models}

For each model size, we train up to 4 reward models starting from the same base checkpoint:
\begin{enumerate}
    \item \textbf{Baseline} reward model, using only the standard Bradley-Terry loss for the final-token reward model outputs. Intermediate outputs are ignored.
    \item \textbf{ImplicitPRM (DPO)} \cite{yuan2024free} reward model, trained with DPO-style objective.
    \item \textbf{TC-$\lambda$} \cite{maystre2025incremental} reward model, trained with temporal-consistency smoothing objective.
    \item \textbf{Temporally coherent} reward model, trained with the loss described in Section \ref{section:temp_coh_reg}.
\end{enumerate}

\begin{table}[h]
    \caption{Main reward-model results: accuracy and temporal coherence metrics}
    \label{tab:main_metrics}
    \centering
    \begin{tabular}{|c|c|c|c|c|c|}
    \hline
    \multirow{2}{*}{\textbf{Base Model}} & \multirow{2}{*}{\textbf{Reward Model Type}} & \multicolumn{2}{c|}{\textbf{Accuracy}} & \multicolumn{2}{c|}{\textbf{Mean Squared Delta}} \\ \cline{3-6} 
     & & \textbf{Final Token} & \textbf{Middle Token} & \textbf{Step} & \textbf{Final} \\ \hline
    \multirow{4}{*}{Qwen3 0.6B} & Baseline & 83.9\% & 48.0\% & 5.35 & 10.70 \\ \cline{2-6}
     & ImplicitPRM (DPO) & 78.2\% & \textbf{73.6\%} & \textbf{0.00} & 5.59 \\ \cline{2-6}
     & TC-$\lambda$ & 84.0\% & 52.6\% & 0.62 & 3.24 \\ \cline{2-6}
     & \textcolor{orange!90!black}{\raisebox{0.1ex}{\large$\bigstar$}} TCRM (ours) & 84.2\% & 63.5\% & 0.64 & \textbf{2.18} \\ \hline
    \multirow{4}{*}{Qwen3 1.7B} & Baseline & 86.5\% & 52.3\% & 3.91 & 6.22 \\ \cline{2-6}
     & ImplicitPRM (DPO) & 79.8\% & \textbf{75.3\%} & \textbf{0.00} & 6.98 \\ \cline{2-6}
     & TC-$\lambda$ & 86.5\% & 54.9\% & 0.64 & 3.65 \\ \cline{2-6}
     & \textcolor{orange!90!black}{\raisebox{0.1ex}{\large$\bigstar$}} TCRM (ours) & 86.5\% & 57.6\% & 0.67 & \textbf{3.19} \\ \hline
    \multirow{4}{*}{Qwen3 4B} & Baseline & 90.3\% & 50.4\% & 8.12 & 20.58 \\ \cline{2-6}
     & ImplicitPRM (DPO) & 83.3\% & \textbf{78.0\%} & \textbf{0.00} & 14.67 \\ \cline{2-6}
     & TC-$\lambda$ & 89.7\% & 64.5\% & 0.36 & 3.25 \\ \cline{2-6}
     & \textcolor{orange!90!black}{\raisebox{0.1ex}{\large$\bigstar$}} TCRM (ours) & 89.7\% & \textbf{78.1\%} & 0.26 & \textbf{1.68} \\ \hline
    \multirow{4}{*}{Qwen3 8B} & Baseline & 90.3\% & 52.0\% & 4.27 & 10.68 \\ \cline{2-6}
     & ImplicitPRM (DPO) & 83.0\% & \textbf{77.8\%} & \textbf{0.00} & 8.73 \\ \cline{2-6}
     & TC-$\lambda$ & 90.0\% & 62.0\% & 0.34 & 3.88 \\ \cline{2-6}
     & \textcolor{orange!90!black}{\raisebox{0.1ex}{\large$\bigstar$}} TCRM (ours) & 90.4\% & 74.3\% & 0.23 & \textbf{2.22} \\ \hline
    \multirow{4}{*}{Qwen3 14B} & Baseline & 93.2\% & 60.1\% & 1.79 & 5.16 \\ \cline{2-6}
     & ImplicitPRM (DPO) & 83.4\% & 79.7\% & \textbf{0.00} & 4.88 \\ \cline{2-6}
     & TC-$\lambda$ & 92.5\% & 86.3\% & 0.04 & 1.05 \\ \cline{2-6}
     & \textcolor{orange!90!black}{\raisebox{0.1ex}{\large$\bigstar$}} TCRM (ours) & 93.6\% & \textbf{88.7\%} & 0.02 & \textbf{0.85} \\ \hline
    \multirow{3}{*}{Qwen3 32B} & Baseline & 93.5\% & 60.1\% & 0.99 & 2.10 \\ \cline{2-6}
     & ImplicitPRM (DPO) & 82.6\% & 78.3\% & \textbf{0.00} & 4.78 \\ \cline{2-6}
     & \textcolor{orange!90!black}{\raisebox{0.1ex}{\large$\bigstar$}} TCRM (ours) & 93.6\% & \textbf{88.9\%} & 0.01 & \textbf{0.34} \\ \hline
     \multirow{4}{*}{Llama 3.1 8B} & Baseline & 92.6\% & 51.1\% & 9.18 & 13.62 \\ \cline{2-6}
     & ImplicitPRM (DPO) & 84.4\% & 79.8\% & \textbf{0.02} & 56.72 \\ \cline{2-6}
     & TC-$\lambda$ & 92.3\% & 71.7\% & 0.08 & 2.88 \\ \cline{2-6}
     & \textcolor{orange!90!black}{\raisebox{0.1ex}{\large$\bigstar$}} TCRM (ours) & 92.5\% & \textbf{84.7\%} & 0.06 & \textbf{1.22} \\ \hline
    \end{tabular}
\end{table}

Table \ref{tab:main_metrics} shows a clear ranking. TC-$\lambda$ and TCRM maintain essentially the same final-token accuracy as Baseline, while ImplicitPRM is consistently worse across all model sizes. ImplicitPRM has the best middle-token accuracy on smaller models but falls behind substantially on larger models, indicating weak scaling properties. TCRM is the most stable method for both types of accuracies across model sizes and families, and it consistently achieves the lowest Final Delta, which is the strongest evidence that its intermediate scores are the best approximation of the conditional expectation of the final reward.

Additionally, in Appendix \ref{sec:rewardbench_v2} we report the results from RewardBench 2 \cite{malik2025rewardbench} evaluation which show that TCRM consistently improves over all baselines at larger model scale, with especially strong gains in Safety and Focus---while competing token-level reward methods (ImplicitPRM, TC-$\lambda$) degrade on larger models.

In Appendix \ref{sec:cond_exp}, we test whether intermediate outputs are exact conditional expectations. Bias and error decrease for longer prefixes, but non-zero residual correlation remains, indicating that the conditional expectation property is only approximate.

\subsection{Interpretability}

We use the prompt "Give me the first 10 numbers of the Fibonacci sequence" and compare baseline vs TCRM token trajectories. In Figure \ref{fig:fib_wrong}, the response is "0,1,1,2,3,5,10,13,21,34" (wrong token: 10 instead of 8): the baseline trajectory is noisy, while TCRM sharply penalizes the error and then partially recovers as later tokens are correct. Additional response variants are shown in Appendix \ref{sec:extra_interpret}: the correct response "0,1,1,2,3,5,8,13,21,34" (Figure \ref{fig:fib_correct}) and an overlong response with an extra eleventh number (Figure \ref{fig:fib_extra}), where TCRM penalizes continuing beyond the requested 10 numbers.

\begin{figure}[ht] % Using [ht] for better placement
\centering
\begin{subfigure}{.5\textwidth}
  \centering
  \includegraphics[width=\linewidth]{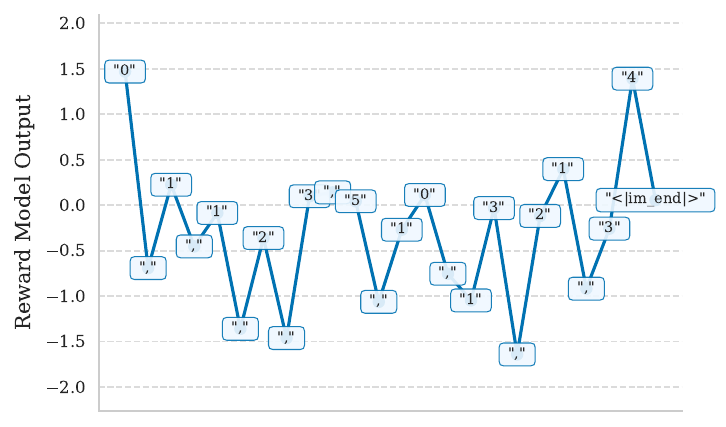} 
  \caption{Baseline reward model}
  \label{fig:fib_baseline_wrong}
\end{subfigure}% <-- The % is important to prevent extra space
\begin{subfigure}{.5\textwidth}
  \centering
  \includegraphics[width=\linewidth]{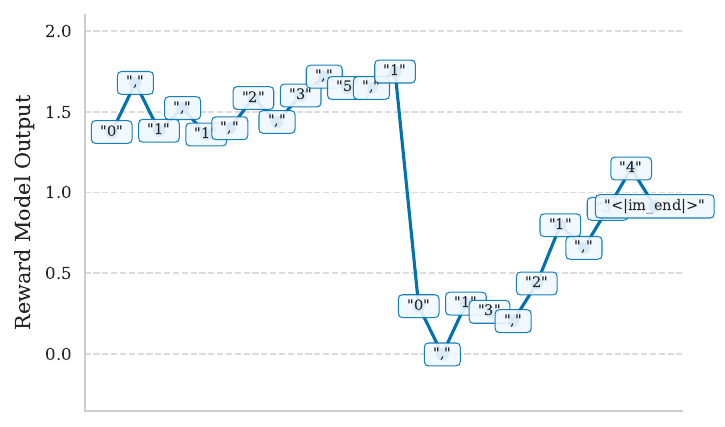}
  \caption{Temporally coherent reward model}
  \label{fig:fib_coherent_wrong}
\end{subfigure}
\caption{Token-level reward trajectories for a Fibonacci response with an incorrect token (10 instead of 8).}
\label{fig:fib_wrong}
\end{figure}

\section{Process Reward Model Evaluation}
\label{sec:process_bench}

Temporally coherent reward models can provide PRM-like step-level feedback while training only on outcome labels. We evaluate this on ProcessBench \cite{zheng2024processbench} against strong open-weight PRMs and outcome-only token-level baselines (ImplicitPRM and ABC).

Because ProcessBench is math-focused, we train a math-specialized TCRM on the same underlying reasoning dataset used by ImplicitPRM, enabling a like-for-like algorithmic comparison \footnote{The dataset has multi-step reasoning traces, but the labels are based only on the final answer correctness.}. Full setup, evaluation protocol, and threshold-selection details are shared in Appendix \ref{sec:pb_eval_details}.

\begin{table}[ht]
\centering
\caption{ProcessBench F1 (compact main-text view). Full leaderboard is in Appendix \ref{sec:pb_eval_details}.}
\label{tab:process_bench}
\begin{tabular}{lrrrrr}
\toprule
\textbf{Model} & \textbf{GSM8K} & \textbf{MATH} & \textbf{Olympiad-Bench} & \textbf{Omni-MATH} & \textbf{Average} \\
\midrule
Qwen2.5-Math-PRM-72B (step-labeled PRM) & \textbf{87.3} & \textbf{80.6} & \textbf{74.3} & \textbf{71.1} & \textbf{78.3} \\
Llama3.1-8B-ImplicitPRM-DPO & \textbf{72.1} & 46.0 & 28.0 & 26.7 & 43.2 \\
Llama3.1-8B-ImplicitPRM-CE & 67.6 & 46.2 & 27.5 & 29.7 & 42.8 \\
ABC-Llama3.1-8B & 38.5 & 29.3 & 16.0 & 15.1 & 24.7 \\
\textcolor{orange!90!black}{\raisebox{0.1ex}{\large$\bigstar$}} TCRM-Llama3.1-8B (trained on math data) & 68.9 & \textbf{47.7} & \textbf{34.8} & 28.3 & \textbf{44.9} \\
\bottomrule
\end{tabular}
\end{table}

Table \ref{tab:process_bench} shows a compact view of representative models. Among outcome-only methods, TCRM (math-trained) has the best average F1 (44.9), outperforming the strongest ImplicitPRM baseline by $+1.7$ points. Full step-labeled PRMs remain stronger overall, but they use substantially richer supervision.

\section{Reinforcement Learning}
\label{sec:rl}

The main application of reward models is as a source of reward signal in Reinforcement Learning training. In this section we evaluate the effectiveness of TCRM compared to a regular Bradley-Terry reward model (baseline). We use PPO since it is the most commonly used algorithm with learned reward models in RLHF. The reward models are trained on the Skywork-Reward-Preference-80K-v0.2 \cite{liu2024skywork} dataset and the prompts for RL are sourced from \textit{Dolci-Instruct-RL} dataset used to train Olmo 3 \cite{olmo2025olmo3}. More details and hyperparameters are reported in Appendix \ref{sec:rl_training}.

PPO learns a value model during training, and TCRM is explicitly designed to share this objective structure---the correspondence between our regularizers and standard Monte Carlo / TD value-learning objectives is established in Section~\ref{sec:value-learning-connection}. In practice, PPO often uses GAE, which interpolates between MC and TD targets; TCRM's combined loss plays an analogous role. Therefore, TCRM can be expected to function both as a reward model and as a value model. We evaluate 3 possible ways of using TCRM for value modeling \footnote{With all of these approaches we also use TCRM as a reward model to get the reward score from the final token.} in PPO:
\begin{enumerate}
    \item \textbf{Full fine-tuning}. TCRM is used to initialize the value model to enable a "warm start" since it already has properties very similar to the value model. After initialization, the standard value model loss is used to fine-tune the value model.
    \item \textbf{Frozen value model}. TCRM is used as the value model with no extra fine-tuning. This approach is most resource-efficient since it eliminates the need for having separate reward and value models (the same model serves as both reward and value model). It also eliminates value model optimization step, reducing both memory and compute requirements.
    \item \textbf{LoRA value model}. A frozen TCRM is used as the reward model and a LoRA adapter is trained on top of it for the value model. This approach significantly reduces memory requirements because it removes the need for a separate value model and significantly reduces the size of value model gradients and optimizer state.
\end{enumerate}

Figure \ref{fig:rlhf_progress} compares PPO training progress across value-model setups (see Appendix \ref{sec:rlhf_eval} for evaluation details). Our primary takeaway is efficiency: TCRM-based setups preserve policy quality while reducing training cost. In particular, the LoRA value-model variant is attractive because it substantially lowers memory requirements while maintaining comparable quality.

\begin{figure}
    \centering
    \includegraphics[width=0.75\linewidth]{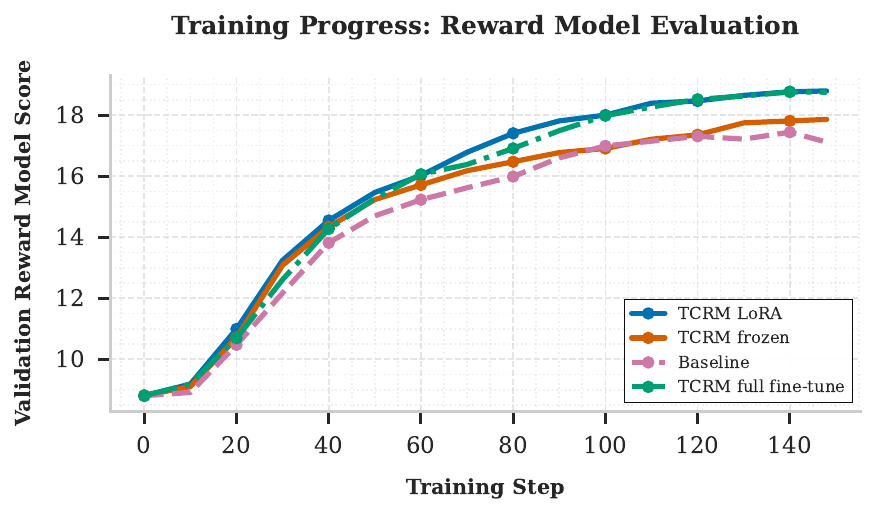}
    \caption{Validation reward model scores for PPO training run with different value model setups.}
    \label{fig:rlhf_progress}
\end{figure}

TCRM estimates the conditional expectation of reward under the assumption that future tokens follow the training distribution. During PPO, generation distributions shift as the policy changes; empirically, however, even a frozen TCRM maintains quality comparable to the more resource-intensive baseline, which supports using TCRM for efficiency gains.

Table \ref{tab:llm_judge} reports an LLM-as-a-Judge comparison between baseline and TCRM (LoRA) reward/value setups. The two LLMs have statistically equivalent quality (TCRM 33.7\% vs baseline 34.3\%; $p=0.85$ for a two-sided test), despite lower resource requirements of TCRM.

\begin{table}[ht]
    \centering
    % Set row height to be slightly more generous for readability
    \renewcommand{\arraystretch}{2}
    
    \caption{LLM-as-a-Judge comparison of LLMs trained with PPO using baseline and TCRM reward models.}
    \label{tab:llm_judge}
    \begin{tabular}{| @{} >{\centering\arraybackslash}p{0.17\textwidth} | 
                      @{} >{\centering\arraybackslash}p{0.17\textwidth} | 
                      @{} >{\centering\arraybackslash}p{0.16\textwidth} @{} |}
        \hline
        
        % First Column: TCRM (spans 2 rows)
        \multirow{2}{=}{\centering \textbf{\textcolor{orange!90!black}{\raisebox{0.1ex}{\large$\bigstar$}} TCRM better} \\ 33.7\%} & 
        
        % Second Column, Top Row: Tie
        Tie \quad 9.2\% & 
        
        % Third Column: Baseline (spans 2 rows)
        \multirow{2}{=}{\centering \textbf{Baseline better} \\ 34.3\%} \\
        
        % Middle horizontal line for the center column only
        \cline{2-2}
        
        % Second Row: Order-depends
        & Depends on order \quad 22.8\% & \\
        \hline
    \end{tabular}
\end{table}

One of the benefits of using TCRM during PPO training is that the value model already has good accuracy from the very first step, providing a "warm start" benefit. In contrast, a standard PPO value model is initialized from a non-coherent reward model and takes 20-30 steps to converge to good accuracy. The policy LLM can suffer from inaccurate reward signals before this convergence happens. In Appendix \ref{sec:rlhf_add_result} we show that freezing the policy LLM for 25 steps to let the value model converge significantly underperforms PPO with unfrozen policy LLM, so it is not a good alternative to a warm start with a TCRM-based value model.

In Appendix \ref{sec:rlhf_add_result} we further show that PPO with TCRM-based reward/value modeling is substantially more resource-efficient and uses KL budget more effectively. Specifically, using a frozen TCRM as both reward and value models reduces peak GPU memory by 27\%, training step time by 19\%, and KL divergence by 26\%, while maintaining comparable validation reward-model scores.

\section{Related Work}
\label{sec:related_work}

\subsection{Reinforcement Learning from Human Feedback}

Reinforcement Learning from Human Feedback (RLHF) was first introduced in \cite{christiano2017rlhf} as an alternative to rigid reward functions used in classical RL environments like MuJoCo (simulated robotics) and Atari (video games). It was later popularized by successful applications to LLMs in \cite{ouyang2022instructgpt} and \cite{bai2022anthropichh}, where it found widespread use. The core RLHF recipe includes training a human preference reward model with a pairwise Bradley-Terry loss \cite{bradley1952bradleyterry} and using this reward model as a proxy for human preference in an online RL training algorithm like PPO \cite{schulman2017ppo} or RLOO \cite{ahmadian2024rloo}. Most relevant to our paper, \cite{wu2023fine} show that using dense token-level feedback can significantly improve the effectiveness of RLHF. Similarly, \cite{yin2025segmenting} use segment-level rewards by assigning the rewards to multi-token segments.

\subsection{Fine-Grained Reward Models}

Since fine-grained reward can improve RLHF performance, recipes for training fine-grained reward models have been researched. These methods largely fall into three categories, depending on the training data labeling process.

First, human annotation can be used to label the data either at a step level or at a token level. Step-level annotation is most common for Process Reward Models (PRM) used for mathematical problem solving, pioneered by \cite{lightman2023letsverify}. Token-level human annotations were used by \cite{wu2023fine} to label the irrelevant or incorrect passages. These papers provided an early proof of concept for the use of dense rewards, but the labor-intensive human annotation methods can not scale to the data volumes used for modern AI training.

Second, a powerful LLM can be used to either edit the response  or critique it, identifying valuable or flawed spans of text, which are then converted into token-level rewards. \cite{guo2023figa,chen2024rlmec,yoon2024tlcr} develop similar methods to rewrite the candidate response using a more powerful LLM and then assign token quality labels based on whether the token was kept, removed or replaced. Similarly, \cite{cao2024relc} prompt a powerful critic LLM to identify the positive and negative text spans and assign the token-level rewards according to that. More recently, step-wise annotations have been auto-generated by an LLM to train a mathematical PRM \cite{zhang2025lessons}. This category of papers relies on costly and often proprietary external models. Moreover, the quality of labeling is upper-bounded by these models' abilities to perform surgical edits and exactly identify positive/negative passages.

Finally, we can use only the outcome labels to train a dense token-level reward model by relying on inductive biases from reward model architecture or losses. Similar to our approach, \cite{yang2023preference} apply supervision to the reward model outputs at all tokens, but their loss doesn't induce any temporal coherence among the outputs. \cite{chan2024abc} develop a method which relies on the attention weights of the transformer to assign the final reward prediction to individual tokens proportional to the attention value from the last token to all tokens of the sequence. This is an inference-only technique with no changes required during training. \cite{yuan2024free} and \cite{rafailov2024dpoqstar} develop reward modeling methods based on Direct Preference Optimization (DPO), utilizing the inductive bias of parameterizing the reward as $r(x,y)=\log \frac{\pi(y|x)}{\pi_{ref}(y|x)}$. \cite{maystre2025incremental} uses an approach similar to ours, but only applies it to a text classification problem. Our paper contributes to this outcome-labeled direction and improves the accuracy compared to all other models in this category. Our key differential contributions are the use of theoretically motivated regularization terms in the reward model training loss to induce the conditional expectation property, as well as the use of TCRM as the value model to substantially improve resource efficiency of PPO RLHF training.

\section{Conclusion}

We introduced Temporally Coherent Reward Modeling (TCRM), a reward-model training framework that supervises intermediate token outputs to approximate the expected final reward. TCRM augments standard Bradley--Terry training with temporal regularization terms chosen so that their minimizers are conditional expectations, and it does not require changes to model architecture or training data format.

Across model scales, TCRM substantially improves prediction quality on partial responses while preserving final-token performance. On ProcessBench, TCRM achieves strong PRM performance among outcome-only methods (44.9\% average F1). In PPO, using TCRM as a combined reward/value model improves practical efficiency: in our setup, a frozen TCRM reward/value model reduces peak GPU memory by 27\% and training step time by 19\% while maintaining comparable or better validation reward-model scores.

A key limitation of our results is that intermediate outputs are not exact conditional expectations: Appendix \ref{sec:cond_exp} shows decreasing bias/error with longer prefixes, but non-zero residual correlation remains. Future work includes improving conditional-expectation calibration, using entropy-aware selective smoothing, and leveraging dense token-level rewards for online RL and fine-grained error correction (e.g., safety violations and reasoning defects).

\section{Impact Statement}

This paper presents work whose goal is to advance the field
of Machine Learning. There are many potential societal
consequences of our work, none which we feel must be
specifically highlighted here.

\clearpage
\newpage
\bibliographystyle{assets/plainnat}
\bibliography{paper}

\clearpage
\newpage
\beginappendix

\section{Conditional Expectation Lemmas}

\begin{lemma}
\label{lem:lookahead_cond_exp}
Let $(\Omega, \mathcal{F}, P)$ be a probability space equipped with a filtration $(\mathcal{F}_t)_{t=0}^\infty$, and let $T$ be a stopping time with respect to $(\mathcal{F}_t)$ satisfying $P(T > k) > 0$ for a fixed $k \in \mathbb{N}$. Let $Z_T = h(X_0, \ldots, X_T) \in L^2(\Omega, \mathcal{F}_T, P)$ be a given terminal value, where $(X_t)_{t=0}^T$ is an $(\mathcal{F}_t)$-adapted process. Consider the problem
\[
  f_k^* \;=\; \underset{f \,\in\, L^2(\mathcal{F}_k)}{\operatorname{arg\,min}}\; E\!\Big[\big(Z_T - f\big)^2 \;\Big|\; T > k\Big].
\]
Then the minimizer exists, is unique $($a.s.$)$, and is given by
\[
  f_k^* \;=\; E\big[Z_T \mid \mathcal{F}_k,\; T > k\big].
\]
\end{lemma}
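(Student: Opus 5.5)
The natural route is to reduce the conditional problem to an unconditional $L^2$ projection under the conditioned measure. Set $A=\{T>k\}$. Because $T$ is a stopping time, $A=\{T\le k\}^c\in\mathcal{F}_k$, and $P(A)>0$ by hypothesis. Define $Q(\cdot)=P(\cdot\cap A)/P(A)$, so that $E[(Z_T-f)^2\mid T>k]=E_Q[(Z_T-f)^2]$. I read the target $E[Z_T\mid\mathcal{F}_k,\,T>k]$ as the conditional expectation $E_Q[Z_T\mid\mathcal{F}_k]$. On $A$ it coincides with $E[Z_T\mathbf 1_A\mid\mathcal{F}_k]/P(A\mid\mathcal{F}_k)$, and since $A\in\mathcal{F}_k$ this is simply $E[Z_T\mathbf 1_A\mid\mathcal{F}_k]$ on $A$. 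Integrability is immediate: $E_Q[Z_T^2]\le E[Z_T^2]/P(A)<\infty$, so $Z_T\in L^2(Q)$, and likewise every $f\in L^2(\mathcal{F}_k,P)$ lies in $L^2(\mathcal{F}_k,Q)$.

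Next comes the orthogonal decomposition. Let $g=E_Q[Z_T\mid\mathcal{F}_k]$, which is $\mathcal{F}_k$-measurable and in $L^2(Q)$. For any admissible $f$,
\[
E_Q[(Z_T-f)^2]=E_Q[(Z_T-g)^2]+E_Q[(g-f)^2]+2E_Q[(Z_T-g)(g-f)].
\]
The cross term vanishes: $g-f$ is $\mathcal{F}_k$-measurable and square-integrable, so by the tower property $E_Q[(Z_T-g)(g-f)]=E_Q[(g-f)\,E_Q[Z_T-g\mid\mathcal{F}_k]]=0$, with Cauchy–Schwarz justifying integrability. Hence $E_Q[(Z_T-f)^2]\ge E_Q[(Z_T-g)^2]$, with equality iff $f=g$ $Q$-a.s. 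This gives existence, provided I check that $g$ is in the admissible class.

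That check is the point needing care, because the objective only sees $f$ on $A$. I would extend $g$ by $0$ on $A^c$; this is legitimate since $A\in\mathcal{F}_k$. The extension equals $\mathbf 1_A E[Z_T\mathbf 1_A\mid\mathcal{F}_k]/P(A\mid\mathcal{F}_k)=E[Z_T\mathbf 1_A\mid\mathcal{F}_k]$, which lies in $L^2(\mathcal{F}_k,P)$ by conditional Jensen. So the minimum is attained.

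Uniqueness holds only modulo $Q$-null sets. The values of $f$ on $\{T\le k\}$ do not affect the objective, so "unique a.s." has to mean $P$-a.s. on $\{T>k\}$, equivalently $Q$-a.s. I would state this convention explicitly. If the reader wants genuine $P$-a.s. uniqueness, one can restrict the admissible class to functions vanishing off $A$.

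Finally, the structure of $Z_T=h(X_0,\dots,X_T)$ plays no role beyond $Z_T\in L^2$.
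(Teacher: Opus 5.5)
Your proof is correct and is essentially the paper's argument: both expand the square around the conditional expectation, make the cross term vanish via the tower property using $\{T>k\}\in\mathcal{F}_k$, and obtain uniqueness only a.s.\ on $\{T>k\}$, which is exactly how the paper's proof concludes. The only difference is packaging. The paper works under $P$ with the candidate $E[Z_T\mid\mathcal{F}_k]$ and places $\mathbf{1}_{\{T>k\}}$ inside the cross term, so the admissibility check you flag is automatic there. Your conditioned measure $Q$, in turn, gives a precise meaning to $E[Z_T\mid\mathcal{F}_k,\,T>k]$, which the paper leaves informal.
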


\begin{proof}[Proof of Lemma ~\ref{lem:lookahead_cond_exp}]
Since $T$ is a stopping time, $\{T > k\} \in \mathcal{F}_k$, so conditioning on $(\mathcal{F}_k,\, T > k)$ coincides with conditioning on $\mathcal{F}_k$ on the event $\{T > k\}$. Set $f_k^* := E[Z_T \mid \mathcal{F}_k]$, which lies in $L^2(\mathcal{F}_k)$ by Jensen's inequality. For any $f \in L^2(\mathcal{F}_k)$, write
\[
  E\!\Big[\big(Z_T - f\big)^2 \;\Big|\; T > k\Big]
  \;=\; E\!\Big[\big(Z_T - f_k^*\big)^2 \;\Big|\; T > k\Big]
  \;+\; 2\,E\!\Big[\big(Z_T - f_k^*\big)\big(f_k^* - f\big) \;\Big|\; T > k\Big]
  \;+\; E\!\Big[\big(f_k^* - f\big)^2 \;\Big|\; T > k\Big].
\]
The cross term vanishes: $(f_k^* - f)\,\mathbf{1}_{\{T>k\}}$ is $\mathcal{F}_k$-measurable, so the tower property gives
\[
  E\!\Big[\big(Z_T - f_k^*\big)\big(f_k^* - f\big)\,\mathbf{1}_{\{T>k\}}\Big]
  = E\!\Big[\underbrace{E\big[Z_T - f_k^* \mid \mathcal{F}_k\big]}_{=\,0}\;\big(f_k^* - f\big)\,\mathbf{1}_{\{T>k\}}\Big]
  = 0.
\]
Hence $E\big[(Z_T - f)^2 \mid T > k\big] \geq E\big[(Z_T - f_k^*)^2 \mid T > k\big]$, with equality if and only if $f = f_k^*$ a.s.\ on $\{T > k\}$.
\end{proof}

\begin{lemma}
\label{lem:smooth_cond_exp}
Let $(\Omega, \mathcal{F}, P)$ be a probability space equipped with a filtration $(\mathcal{F}_t)_{t=0}^T$, where $T < \infty$ is a bounded stopping time. Let $X_T \in L^2(\Omega, \mathcal{F}_T, P)$ be a given terminal value. Define the backward recursion: set $X_T^* := X_T$, and for $t = T{-}1, T{-}2, \ldots, 0$, let
\[
  X_t^* \;=\; \underset{Y \in L^2(\mathcal{F}_t)}{\operatorname{arg\,min}}\; E\!\Big[\big(X_{t+1}^* - Y\big)^2 \,\Big|\, \mathcal{F}_t\Big].
\]
Then the following hold:
\begin{enumerate}
  \item[\textup{(i)}] The minimizer exists, is unique $($a.s.$)$, and is given by
  \[
    X_t^* = E\big[X_{t+1}^* \mid \mathcal{F}_t\big], \qquad t = T{-}1, \ldots, 0.
  \]
  \item[\textup{(ii)}] The process $(X_t^*)_{t=0}^T$ is a martingale with respect to $(\mathcal{F}_t)_{t=0}^T$.
  \item[\textup{(iii)}] For every $t \in \{0, \ldots, T\}$,
  \[
    X_t^* = E\big[X_T \mid \mathcal{F}_t\big].
  \]
\end{enumerate}
\end{lemma}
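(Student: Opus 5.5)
We argue by backward induction on $t$, reusing the orthogonal-projection argument from the proof of Lemma~\ref{lem:lookahead_cond_exp}. Since $T$ is bounded, we treat it as a deterministic horizon (or replace it by its bound, freezing $X^*_t := X_T$ for $t \ge T$ as usual for a stopped process), so the recursion has finitely many steps.

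\textbf{Step 1 (integrability).} We first check by downward induction that each $X_{t+1}^* \in L^2(\mathcal{F}_{t+1})$, so that the optimization problem at step $t$ is well posed. The base case is $X_T^* = X_T \in L^2$ by assumption. For the inductive step, once (i) gives $X_t^* = E[X_{t+1}^*\mid\mathcal{F}_t]$, conditional Jensen yields $E[(X_t^*)^2] \le E[(X_{t+1}^*)^2] < \infty$, and $X_t^*$ is $\mathcal{F}_t$-measurable.

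\textbf{Step 2 (part (i)).} Fix $t$ and set $M := E[X_{t+1}^*\mid\mathcal{F}_t]$. For any $Y\in L^2(\mathcal{F}_t)$ we expand
\[
E[(X_{t+1}^*-Y)^2\mid\mathcal{F}_t] = E[(X_{t+1}^*-M)^2\mid\mathcal{F}_t] + 2(M-Y)\,E[X_{t+1}^*-M\mid\mathcal{F}_t] + (M-Y)^2 .
\]
Here we pull out the $\mathcal{F}_t$-measurable factor $M-Y$; this is justified because all the products are integrable by Cauchy--Schwarz. The middle term vanishes, so the objective is minimized exactly when $(M-Y)^2=0$, that is, when $Y=M$ a.s. The one subtlety is the meaning of "argmin" of a random variable. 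We interpret it pointwise a.s., which is the strongest sense and yields uniqueness. Taking expectations gives the same conclusion for the unconditional $L^2$ criterion, with uniqueness following from $E[(M-Y)^2]=0$.

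\textbf{Step 3 (parts (ii) and (iii)).} By Step 2, each $X_t^*$ is adapted, lies in $L^2\subset L^1$, and satisfies $E[X_{t+1}^*\mid\mathcal{F}_t]=X_t^*$, which is exactly the martingale property (ii). For (iii) we iterate the tower property:
\[
X_t^* = E[E[X_{t+2}^*\mid\mathcal{F}_{t+1}]\mid\mathcal{F}_t] = E[X_{t+2}^*\mid\mathcal{F}_t] = \cdots = E[X_T\mid\mathcal{F}_t],
\]
which can be formalized as a downward induction on $t$ starting from $t=T$, where the statement is trivial because $X_T$ is $\mathcal{F}_T$-measurable.

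\textbf{Main obstacle.} The only genuinely delicate point is the interpretation when $T$ is a random bounded stopping time rather than a constant. In that case the "backward recursion for $t=T-1,\dots,0$" must be read on the events $\{T>t\}$. I would handle this by working with the stopped filtration and setting $X_t^*=X_T$ on $\{T\le t\}$, which is consistent because $X_T\mathbf{1}_{\{T\le t\}}$ is $\mathcal{F}_t$-measurable. I would then apply Steps 2 and 3 on $\{T>t\}$, using that this event lies in $\mathcal{F}_t$, exactly as in Lemma~\ref{lem:lookahead_cond_exp}. On that event the optional-stopping identity $E[X_T\mid\mathcal{F}_t]$ holds because $T$ is bounded.
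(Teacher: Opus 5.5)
Your proposal is correct and follows essentially the same route as the paper: (i) uses the orthogonal decomposition $X_{t+1}^*-Y=(X_{t+1}^*-M)+(M-Y)$, with the cross term removed by pulling out the $\mathcal{F}_t$-measurable factor; (ii) reads the martingale property directly off the recursion; and (iii) is a backward induction via the tower property, with $L^2$-integrability carried along by conditional Jensen. Your extra treatment of a genuinely random bounded $T$ (setting $X_t^*=X_T$ on $\{T\le t\}\in\mathcal{F}_t$) goes beyond the paper, which implicitly treats $T$ as a deterministic horizon, and it is sound, though the appeal to ``optional stopping'' there is unnecessary because the identity follows from the recursion and the tower property alone.
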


\begin{proof}[Proof of Lemma ~\ref{lem:smooth_cond_exp}]
\textbf{(i)} Fix $t$ and suppose $X_{t+1}^* \in L^2$ is already defined. For any $\mathcal{F}_t$-measurable $Y \in L^2$, decompose
\[
  X_{t+1}^* - Y \;=\; \underbrace{\big(X_{t+1}^* - E[X_{t+1}^*|\mathcal{F}_t]\big)}_{\displaystyle =:\, \varepsilon_{t+1}} \;+\; \underbrace{\big(E[X_{t+1}^*|\mathcal{F}_t] - Y\big)}_{\displaystyle =:\, \delta_t}.
\]
The term $\varepsilon_{t+1}$ satisfies $E[\varepsilon_{t+1} \mid \mathcal{F}_t] = 0$, and $\delta_t$ is $\mathcal{F}_t$-measurable. Therefore, by the orthogonality of $L^2$-projections,
\begin{align*}
  E\!\big[(X_{t+1}^* - Y)^2 \mid \mathcal{F}_t\big]
  &= E\!\big[\varepsilon_{t+1}^2 \mid \mathcal{F}_t\big] + 2\,\delta_t \cdot \underbrace{E[\varepsilon_{t+1} \mid \mathcal{F}_t]}_{=\,0} + \delta_t^2 \\
  &= E\!\big[\varepsilon_{t+1}^2 \mid \mathcal{F}_t\big] + \delta_t^2.
\end{align*}
The first term does not depend on $Y$. The second term is non-negative and vanishes if and only if $Y = E[X_{t+1}^* \mid \mathcal{F}_t]$ almost surely. Since $X_{t+1}^* \in L^2$ implies $E[X_{t+1}^* \mid \mathcal{F}_t] \in L^2$ (by Jensen's inequality), the minimizer exists, is unique a.s., and equals $E[X_{t+1}^* \mid \mathcal{F}_t]$.
 
\medskip
\textbf{(ii)} By construction, $E[X_{t+1}^* \mid \mathcal{F}_t] = X_t^*$ for all $t$, which is the martingale property. Adaptedness holds since each $X_t^*$ is $\mathcal{F}_t$-measurable by definition, and $L^2$-integrability follows inductively from part~(i).
 
\medskip
\textbf{(iii)} We proceed by backward induction.
 
\emph{Base case.} $X_T^* = X_T = E[X_T \mid \mathcal{F}_T]$.
 
\emph{Inductive step.} Suppose $X_{t+1}^* = E[X_T \mid \mathcal{F}_{t+1}]$. Then
\[
  X_t^* = E\big[X_{t+1}^* \mid \mathcal{F}_t\big]
        = E\!\Big[E\big[X_T \mid \mathcal{F}_{t+1}\big] \,\Big|\, \mathcal{F}_t\Big]
        = E\big[X_T \mid \mathcal{F}_t\big],
\]
where the last equality follows from the tower property of conditional expectation.
\end{proof}

\section{Additional Training Details}
\label{section:add_train_det}
All reward models were trained on a single node with 8 H100 GPUs.

The open source TRL framework was used for training.

The batch size and parallelism strategy were chosen for each model size to maximize compute efficiency. These hyperparameters were always kept constant in reward models of the same size to maintain comparability.

\begin{table}[h]
    \centering
    \begin{tabular}{|c|c|}
        \hline
        \textbf{Hyperparameter} & \textbf{Value} \\ \hline
        Maximum response length & 1024 tokens \\ \hline
        Number of training epochs & 1 \\ \hline
        Learning rate & 1e-5 \\ \hline
        Optimizer & AdamW \\ \hline
        Batch size & 32 (all models 8B and smaller); 16 (Qwen 3 14B); 2 (Qwen 3 32B) \\ \hline
        Gradient accumulation steps & 1 (all models except Qwen 3 32B); 8 (Qwen 3 32B) \\ \hline
        Smoothness weight in the loss & 0.1 \\ \hline
        Lookahead consistency weight in the loss & 0.01 \\ \hline
    \end{tabular}
    \caption{Training hyperparameters}
    \label{tab:hyperparams}
\end{table}

As additional baselines, we also trained reward models using ImplicitPRM \cite{yuan2024free} and TC-$\lambda$ \cite{maystre2025incremental} methods. For ImplictPRM we used the DPO version since it had better results.
ImplicitPRM was trained with OpenRLHF since DPO training in TRL was unstable. Hypermarameters we $\beta=0.05$, learning rate $5 \times 10^{-7}$.
For TC-$\lambda$ the base training objective is the standard Bradley-Terry pairwise preference loss on the final token. We augment this with a temporal consistency (TC) regularization loss weighted by $\alpha = 0.1$.
We considered two TC loss variants: MSE in the logit space and BCE applied to sigmoids of logits. The MSE version had better accuracy, so we report results from that version.
We set $\lambda = 0.95$ and detach (stop-gradient, $\mathrm{sg}$) the targets to prevent backpropagation through the exponential smoothing chain. 

\section{Ablations}
\label{sec:ablations}

In Section \ref{section:temp_coh_reg} we present several different regularization loss terms which induce different desired properties of a temporally coherent reward model. Here we analyze the effect of each of these loss terms. All ablation experiments were based on Llama 3.1 8B.

We have 3 possible regularization loss terms and 3 temporal coherence metrics (in addition to final-token accuracy). Not surprisingly, each metric is best optimized by using its corresponding loss term. For all regularization types excessive regularization strength leads to substantial reduction in final-token accuracy - the outcome which we want to avoid. Because of this, for our main models reported in Table \ref{tab:main_metrics} we mix all 3 regularization terms with small coefficients (see Table \ref{tab:hyperparams} for exact configuration).

Comparing the metrics in tables \ref{tab:abl_lookahead_detach} and \ref{tab:abl_lookahead_no_detach} we see that not detaching the targets in lookahead consistency regularization term\footnote{The same is true for the smoothness loss term, table not included in the paper for brevity.} leads to faster degradation of the final-token accuracy. Because of this, in this paper we use the regularization terms (both smoothness and lookahead consistency) with detached targets.

\begin{table}[h]
    \centering
    \begin{tabular}{|c|c|c|c|c|}
    \hline \multirow{2}{*}{\textbf{Smoothness loss coefficient}} & \multicolumn{2}{c|}{\textbf{Accuracy}} & \multicolumn{2}{c|}{\textbf{Mean Squared Delta}} \\ \cline{2-5} 
     & \textbf{Final Token} & \textbf{Middle Token} & \textbf{Step} & \textbf{Final} \\ \hline
    0    & 92.6\% & 53.4\% & 7.98 & 11.41 \\ \hline
    1e-3 & 92.5\% &	55.4\% & 4.97 & 10.03 \\ \hline
    3e-3 & 92.6\% &	55.0\% & 3.05 & 7.90 \\ \hline
    1e-2 & 92.4\% &	55.3\% & 0.72 & 5.53 \\ \hline
    3e-2 & 92.4\% &	56.1\% & 0.15 & 4.77 \\ \hline
    1e-1 & 92.4\% &	61.1\% & 0.05 & 4.60 \\ \hline
    3e-1 & 92.3\% &	67.1\% & 0.04 & 4.68 \\ \hline
    1.0  & 91.9\% &	75.5\% & 0.03 & 3.45 \\ \hline
    3.0  & 91.5\% &	82.1\% & 0.01 & 1.80 \\ \hline
    10.0 & 87.3\% &	76.7\% & 0.01 & 1.09 \\ \hline
    \end{tabular}
    \caption{Metrics for temporally coherent models trained with smoothness regularization only.}
    \label{tab:abl_smooth}
\end{table}

\begin{table}[h]
    \centering
    \begin{tabular}{|c|c|c|c|c|}
    \hline \multirow{2}{*}{\textbf{Lookahead consistency loss coefficient}} & \multicolumn{2}{c|}{\textbf{Accuracy}} & \multicolumn{2}{c|}{\textbf{Mean Squared Delta}} \\ \cline{2-5} 
     & \textbf{Final Token} & \textbf{Middle Token} & \textbf{Step} & \textbf{Final} \\ \hline
    0    & 92.6\% & 53.4\% & 7.98 & 11.41 \\ \hline
    3e-3 & 92.4\% & 65.6\% & 2.55 & 4.45 \\ \hline
    1e-2 & 92.7\% & 77.9\% & 0.65 & 2.13 \\ \hline
    3e-2 & 92.0\% & 80.3\% & 0.50 & 1.73 \\ \hline
    1e-1 & 88.3\% & 66.8\% & 0.90 & 1.74 \\ \hline
    3e-1 & 88.4\% & 73.5\% & 0.21 & 1.04 \\ \hline
    1.0 & 72.8\% & 68.4\% & 0.05 & 0.23 \\ \hline
    3.0 & 79.7\% & 65.4\% & 0.11 & 0.48 \\ \hline
    \end{tabular}
    \caption{Metrics for temporally coherent models trained with lookahead consistency regularization only.}
    \label{tab:abl_lookahead}
\end{table}

\begin{table}[h]
    \centering
    \begin{tabular}{|c|c|c|c|c|}
    \hline \multirow{2}{*}{\textbf{Lookahead consistency loss coefficient}} & \multicolumn{2}{c|}{\textbf{Accuracy}} & \multicolumn{2}{c|}{\textbf{Mean Squared Delta}} \\ \cline{2-5} 
     & \textbf{Final Token} & \textbf{Middle Token} & \textbf{Step} & \textbf{Final} \\ \hline
    0    & 92.6\% & 53.4\% & 7.98 & 11.41 \\ \hline
    3e-3 & 92.4\% & 65.6\% & 2.55 & 4.45 \\ \hline
    1e-2 & 92.7\% & 77.9\% & 0.65 & 2.13 \\ \hline
    3e-2 & 92.0\% & 80.3\% & 0.50 & 1.73 \\ \hline
    1e-1 & 88.3\% & 66.8\% & 0.90 & 1.74 \\ \hline
    3e-1 & 88.4\% & 73.5\% & 0.21 & 1.04 \\ \hline
    1.0 & 72.8\% & 68.4\% & 0.05 & 0.23 \\ \hline
    3.0 & 79.7\% & 65.4\% & 0.11 & 0.48 \\ \hline
    \end{tabular}
    \caption{Metrics for temporally coherent models trained with lookahead consistency regularization only (targets detached).}
    \label{tab:abl_lookahead_detach}
\end{table}

\begin{table}[h]
    \centering
    \begin{tabular}{|c|c|c|c|c|}
    \hline \multirow{2}{*}{\textbf{Lookahead consistency loss coefficient}} & \multicolumn{2}{c|}{\textbf{Accuracy}} & \multicolumn{2}{c|}{\textbf{Mean Squared Delta}} \\ \cline{2-5} 
     & \textbf{Final Token} & \textbf{Middle Token} & \textbf{Step} & \textbf{Final} \\ \hline
    0    & 92.6\% & 53.4\% & 7.98 & 11.41 \\ \hline
    3e-3 & 92.8\% & 65.9\% & 2.17 & 3.80 \\ \hline
    1e-2 & 92.6\% & 77.8\% & 0.64 & 1.69 \\ \hline
    3e-2 & 92.3\% & 82.0\% & 0.22 & 0.92 \\ \hline
    1e-1 & 90.8\% & 81.3\% & 0.08 & 0.45 \\ \hline
    3e-1 & 89.4\% & 72.8\% & 0.03 & 0.18 \\ \hline
    1.0 & 83.8\% & 70.5\% & 0.01 & 0.04 \\ \hline
    3.0 & 83.6\% & 81.0\% & 0.002 & 0.01 \\ \hline
    \end{tabular}
    \caption{Metrics for temporally coherent models trained with lookahead consistency regularization only (targets not detached).}
    \label{tab:abl_lookahead_no_detach}
\end{table}

\section{Cross-Dataset Generalization}
\label{sec:cross_dataset}

To better understand the generalization properties of TCRM, we use 3 different dataset for reward model training and evaluation. We train a TCRM on each of the datasets and compare it to a standard Bradley-Terry reward model (baseline). The following tables show the comparison of final-token accuracy (\ref{tab:cross_dataset_final_token_accuracy}), middle-token accuracy (\ref{tab:cross_dataset_middle_token_accuracy}) and MSE of intermediate token output compared to the final token output (\ref{tab:cross_dataset_final_token_MSE}) between TCRM and baseline models.

We used the following datasets:
1. Helpful-Harmless \cite{bai2022anthropichh}
2. Skywork-Reward-Preference-80K-v0.2 \cite{liu2024skywork}
3. UltraFeedback \cite{cui2023ultrafeedback}

In most cases we see that for each evaluation dataset the model trained on the same \footnote{Each dataset was split into train/test subsets, so there was no direct data leakage} dataset performs the best. For final-token accuracy we see that a TCRM model generalizes as well or better than the baseline reward model, especially for the UltraFeedback evaluation dataset. For middle-token accuracy the baseline model has close to random accuracy, as should be expected. The TCRM has much better middle-token accuracies and they are also the highest when training and evaluating on the same dataset. The reductions in middle-token accuracy for off-diagonal table entries are similar to those observed for final-token accuracy, highlighting that the quality of intermediate token predictions in TCRM generalizes as much as the quality of final token predictions. The MSE to the final token is somewhat sensitive to data distribution shifts, but there are also surprising results like the model trained on UltraFeedback having the lowest MSE for evaluation on the Skywork dataset.

\begin{table}[ht]
    \centering
    \caption{Final-token accuracy comparison of baseline and TCRM models}
    \label{tab:cross_dataset_final_token_accuracy}

    % --- First Subtable (Left) ---
    \begin{subtable}[t]{0.45\textwidth}
        \centering
        \caption{Baseline}
        % \label{tab:experiment_a}
        \begin{tabular}{l c c c}
            \toprule
             & \multicolumn{3}{c}{\textbf{Training dataset}} \\
            \cmidrule(lr){2-4}
            \textbf{Evaluation dataset} & \textbf{HH} & \textbf{Skywork} & \textbf{UltraFB} \\
            \midrule
            HH & \textbf{72.2\%} & 61.9\% & 58.1\% \\
            Skywork & 74.0\% & \textbf{92.6\%} & 82.6\% \\
            UltraFB & 71.1\% & 71.7\% & \textbf{81.1\%} \\
            \bottomrule
        \end{tabular}
    \end{subtable}
    \hfill
    % --- Second Subtable (Right) ---
    \begin{subtable}[t]{0.45\textwidth}
        \centering
        \caption{\textcolor{orange!90!black}{\raisebox{0.1ex}{\large$\bigstar$}} TCRM}
        \begin{tabular}{l c c c}
            \toprule
             & \multicolumn{3}{c}{\textbf{Training dataset}} \\
            \cmidrule(lr){2-4}
            \textbf{Evaluation dataset} & \textbf{HH} & \textbf{Skywork} & \textbf{UltraFB} \\
            \midrule
            HH &\textbf{ 72.5\%} & 62.6\% & 56.5\% \\
            Skywork & 73.4\% & \textbf{92.1\%} & 83.1\% \\
            UltraFB & 72.4\% & 75.4\% & \textbf{81.9\%} \\
            \bottomrule
        \end{tabular}
    \end{subtable}
\end{table}

\begin{table}[ht]
    \centering
    \caption{Middle-token accuracy comparison of baseline and TCRM models}
    \label{tab:cross_dataset_middle_token_accuracy}

    % --- First Subtable (Left) ---
    \begin{subtable}[t]{0.45\textwidth}
        \centering
        \caption{Baseline}
        % \label{tab:experiment_a}
        \begin{tabular}{l c c c}
            \toprule
             & \multicolumn{3}{c}{\textbf{Training dataset}} \\
            \cmidrule(lr){2-4}
            \textbf{Evaluation dataset} & \textbf{HH} & \textbf{Skywork} & \textbf{UltraFB} \\
            \midrule
            HH & \textbf{53.3\%} & 50.0\% & 50.7\% \\
            Skywork & \textbf{53.8\%} & 53.4\% & 48.9\% \\
            UltraFB & \textbf{53.2\%} & 50.7\% & 52.6\% \\
            \bottomrule
        \end{tabular}
    \end{subtable}
    \hfill
    % --- Second Subtable (Right) ---
    \begin{subtable}[t]{0.45\textwidth}
        \centering
        \caption{\textcolor{orange!90!black}{\raisebox{0.1ex}{\large$\bigstar$}} TCRM}
        \begin{tabular}{l c c c}
            \toprule
             & \multicolumn{3}{c}{\textbf{Training dataset}} \\
            \cmidrule(lr){2-4}
            \textbf{Evaluation dataset} & \textbf{HH} & \textbf{Skywork} & \textbf{UltraFB} \\
            \midrule
            HH & \textbf{66.1\%} & 58.0\% & 54.5\% \\
            Skywork & 66.5\% & \textbf{84.6\%} & 71.4\% \\
            UltraFB & 66.7\% & 71.1\% & \textbf{76.1\%} \\
            \bottomrule
        \end{tabular}
    \end{subtable}
\end{table}

\begin{table}[ht]
    \centering
    \caption{MSE (of intermediate reward scores compared to the final reward score) comparison of baseline and TCRM models}
    \label{tab:cross_dataset_final_token_MSE}

    % --- First Subtable (Left) ---
    \begin{subtable}[t]{0.45\textwidth}
        \centering
        \caption{Baseline}
        % \label{tab:experiment_a}
        \begin{tabular}{l c c c}
            \toprule
             & \multicolumn{3}{c}{\textbf{Training dataset}} \\
            \cmidrule(lr){2-4}
            \textbf{Evaluation dataset} & \textbf{HH} & \textbf{Skywork} & \textbf{UltraFB} \\
            \midrule
            HH & \textbf{4.63} & 7.64 & 7.04 \\
            Skywork & \textbf{7.80} & 10.96 & 10.07 \\
            UltraFB & \textbf{7.14} & 8.66 & 8.59 \\
            \bottomrule
        \end{tabular}
    \end{subtable}
    \hfill
    % --- Second Subtable (Right) ---
    \begin{subtable}[t]{0.45\textwidth}
        \centering
        \caption{\textcolor{orange!90!black}{\raisebox{0.1ex}{\large$\bigstar$}} TCRM}
        \begin{tabular}{l c c c}
            \toprule
             & \multicolumn{3}{c}{\textbf{Training dataset}} \\
            \cmidrule(lr){2-4}
            \textbf{Evaluation dataset} & \textbf{HH} & \textbf{Skywork} & \textbf{UltraFB} \\
            \midrule
            HH & \textbf{0.39} & 0.94 & 1.05 \\
            Skywork & 0.52 & 1.23 & \textbf{0.45} \\
            UltraFB & 0.50 & 1.28 & \textbf{0.48} \\
            \bottomrule
        \end{tabular}
    \end{subtable}
\end{table}

\section{Conditional Expectation Evaluation}
\label{sec:cond_exp}

In this section we check to what extent the conditional expectation property $r(x,y_{0..k})=E[r(x,y)|x,y_{0..k}]$ holds, where $r(x,y)$ is the reward model output at the final token. We'll approach this analysis in 3 ways:

\begin{enumerate}
    \item Bias. If $r(x,y_{0..k})=E[r(x,y)|x,y_{0..k}]$, then the average value of $r(x,y_{0..k}) - r(x,y)$ should be close to 0.
    \item Decreasing prediction error. Since $y_{0..k}$ is more informative than $y_{0..k-1}$, we should expect to see the mean squared prediction error $(r(x,y_{0..k}) - r(x,y))^2$ decrease as $k$ increases.
    \item Orthogonality principle. If $f(x)=E[Y|X=x]$, then the residual $Y-f(x)$ must be uncorrelated with any function of $X$. We will use this principle in a simple form by checking the correlation of the residual with the intermediate score itself $Corr(r(x,y) - r(x,y_{0..k}), r(x,y_{0..k}))$.
\end{enumerate}

All metrics will be broken down by relative position within the response (100 bucket representing percentiles of position within response). Also, the metrics will be broken down by whether they are calculated on chosen or rejected response from the Skywork dataset. Figure \ref{fig:cond_exp_test} shows the results. The left plot depicts how the bias changes in a very intuitive pattern as the position increases - for chosen responses we start by underestimating their quality and then gradually converge towards unbiased estimate, while for rejected responses we start by overestimating them and then the positive bias gradually decreases as we observe more of the response. On the central plot we see that the prediction error decreases smoothly as we move towards the end of the response. A few interesting patterns in these curves: (1) The error does not converge to zero towards the end of the response because the EOS token often has a large jump in reward score - this is expected because until the EOS token is generated it's still possible that there could be a continuation where wrong/correct information is added; (2) for rejected responses (red curve) there is initially a rapid reduction in error, followed by slower reduction. The initial rapid reduction corresponds to identifying wrong information early in the response, while slower decline later on represents lack of correct information in the response (when we can't immediately conclude that the response  will be bad and need to wait to confirm that the correct response doesn't appear later). The correlation test (right plot) shows that intermediate model outputs aren't strictly conditional expectations of the final reward score. Instead, the residual is positively correlated with the current score, signaling that the model isn't using the information in the input to the full extent.

\begin{figure}
    \centering
    \includegraphics[width=1\linewidth]{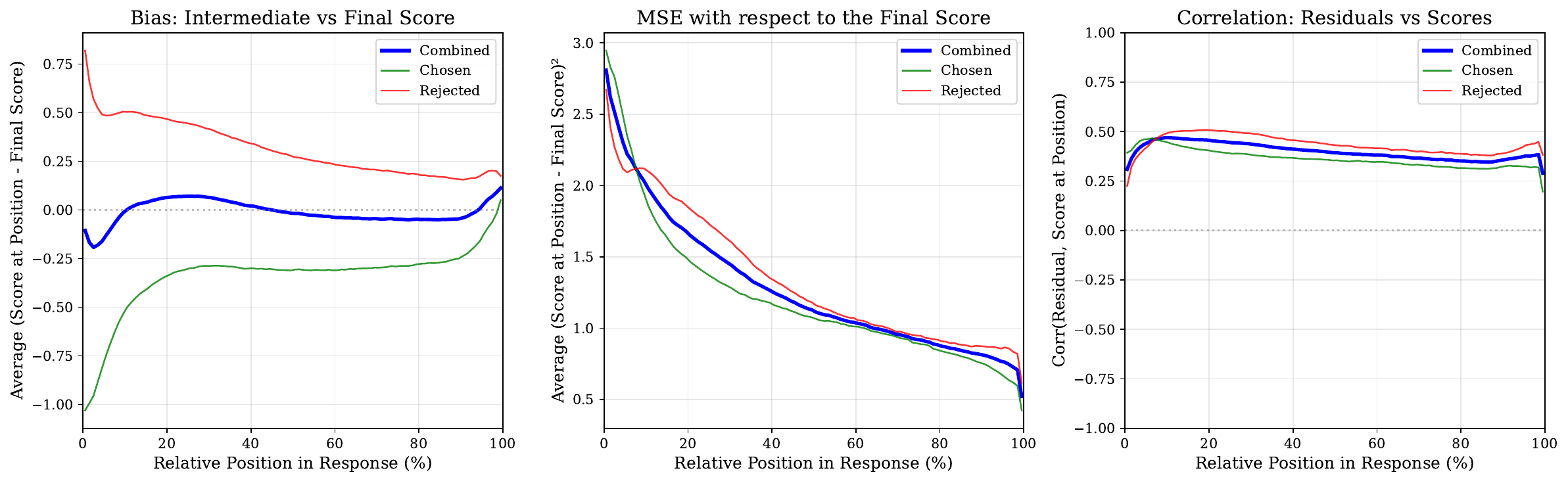}
    \caption{Conditional expectation tests. (Left) Bias; (Center) Prediction error; (Right) Correlation of residual with intermediate score.}
    \label{fig:cond_exp_test}
\end{figure}

\section{Additional Interpretability Results}
\label{sec:extra_interpret}

\begin{figure}[ht]
\centering
\begin{subfigure}{.5\textwidth}
  \centering
  \includegraphics[width=\linewidth]{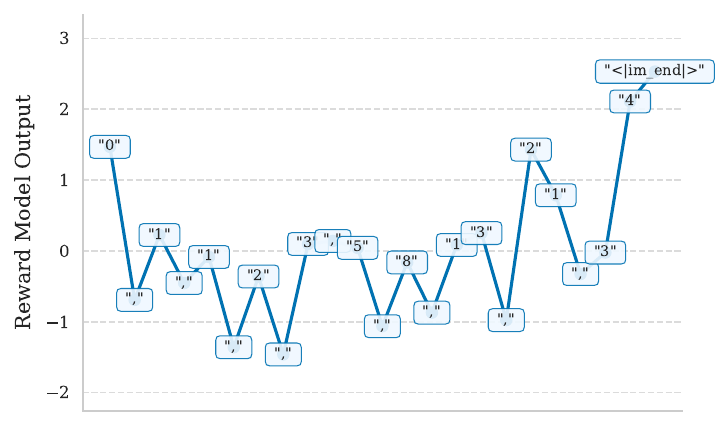}
  \caption{Baseline reward model}
  \label{fig:fib_baseline_correct}
\end{subfigure}
\begin{subfigure}{.5\textwidth}
  \centering
  \includegraphics[width=\linewidth]{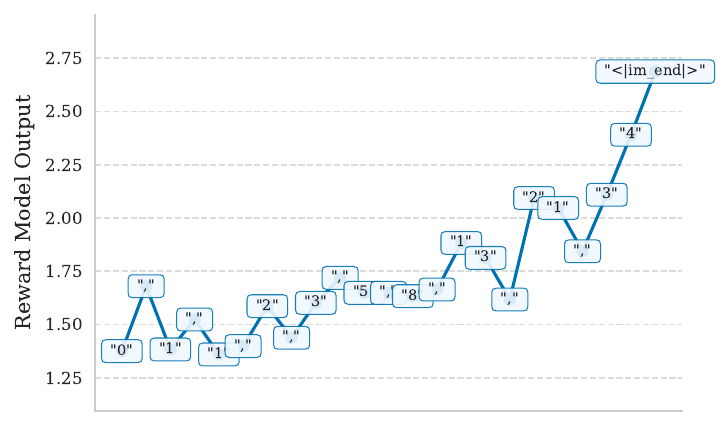}
  \caption{Temporally coherent reward model}
  \label{fig:fib_coherent_correct}
\end{subfigure}
\caption{Token-level reward trajectories for a correct Fibonacci response (10 numbers).}
\label{fig:fib_correct}
\end{figure}

\begin{figure}[ht] % Using [ht] for better placement
\centering
\begin{subfigure}{.5\textwidth}
  \centering
  \includegraphics[width=\linewidth]{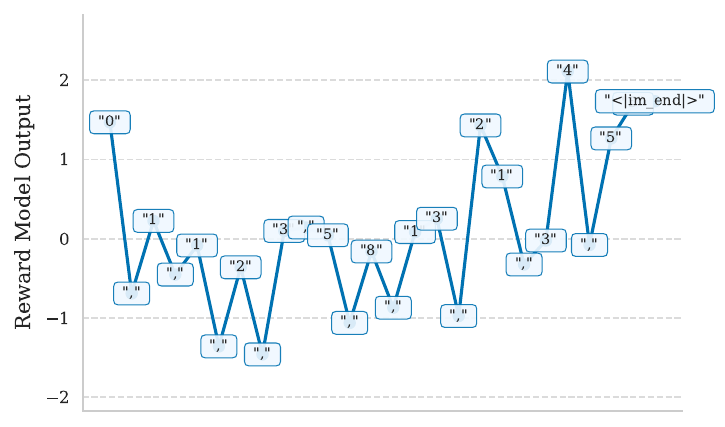} 
  \caption{Baseline reward model}
  \label{fig:fib_baseline_extra}
\end{subfigure}% <-- The % is important to prevent extra space
\begin{subfigure}{.5\textwidth}
  \centering
  \includegraphics[width=\linewidth]{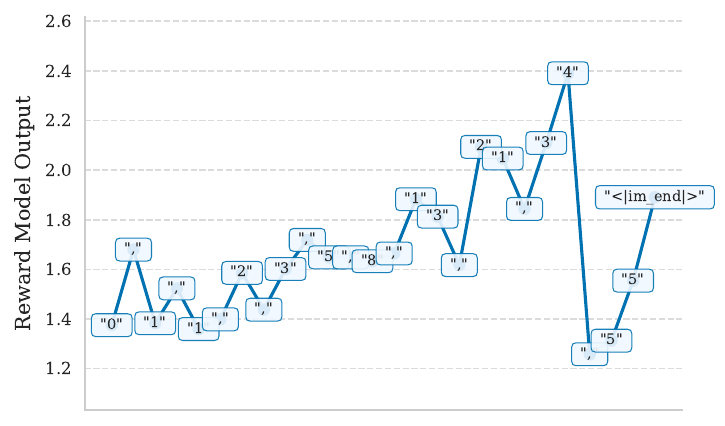}
  \caption{Temporally coherent reward model}
  \label{fig:fib_coherent_extra}
\end{subfigure}
\caption{Token-level reward trajectories for an overlong Fibonacci response (11 numbers instead of 10).}
\label{fig:fib_extra}
\end{figure}

\subsection{General Question Answering}

We use the prompt "What is the capital of Italy?" and the responses are scored by a temporally coherent reward model based on Qwen3-32B. The following candidate responses are considered:
\begin{enumerate}
    \item Correct. "Rome"
    \item Correct, with extra prefix. "It is Rome"
    \item Correct, with extra suffix. "Rome, also known as The Eternal City"
    \item Wrong suffix 1. "Rome, GA"
    \item Wrong suffix 2. "Rome, also known as The City of Love"
\end{enumerate}

Figure \ref{fig:rome} shows the results, which generally follow very intuitive patterns. The most succinct correct answer gets assigned the highest final reward. Adding an extra uninformative prefix to the correct answer in Figure \ref{fig:rome_coherent_ok_prefix} first reduces the reward score (showing that direct answers without filler words are preferred by the reward model), but the reward score increases sharply after the correct answer appears in the response. Appending a factually correct suffix in Figure \ref{fig:rome_coherent_ok_suffix} first reduces the reward score, but then it partially recovers after the reward model confirms that there are no factual errors in the suffix. Appending wrong suffixes to the correct answer in Figures \ref{fig:rome_coherent_wrong_suffix1} and \ref{fig:rome_coherent_wrong_suffix2} reduces the reward score.

\begin{figure}[ht]
    \centering % Center all subfigures
    
    % --- Top Row ---
    \begin{subfigure}[b]{0.48\textwidth}
        \centering
        \includegraphics[width=\linewidth]{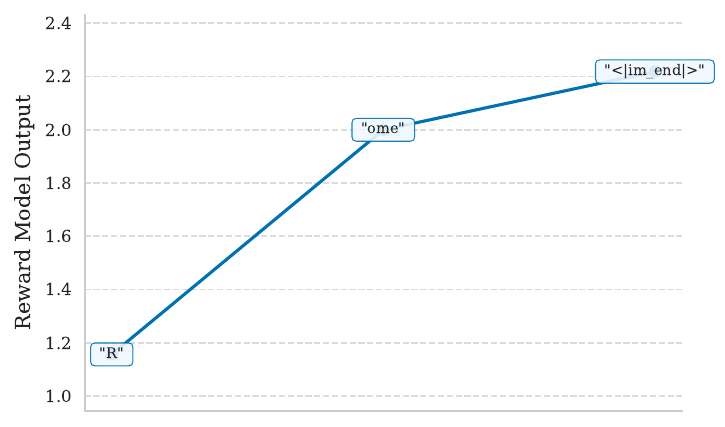}
        \caption{Correct response}
    \end{subfigure}
    \hfill % Adds horizontal space between the two images
    \begin{subfigure}[b]{0.48\textwidth}
        \centering
        \includegraphics[width=\linewidth]{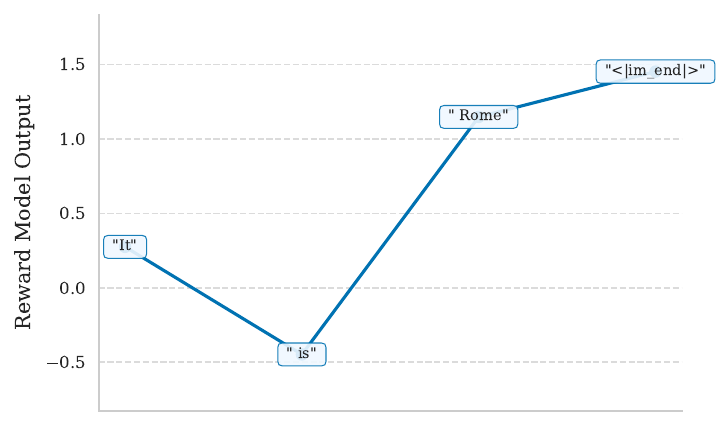}
        \caption{Correct response with extra prefix}
        \label{fig:rome_coherent_ok_prefix}
    \end{subfigure}
    
    % This blank line creates the new row
    
    % --- Middle Row ---
    \begin{subfigure}[b]{0.48\textwidth}
        \centering
        \includegraphics[width=\linewidth]{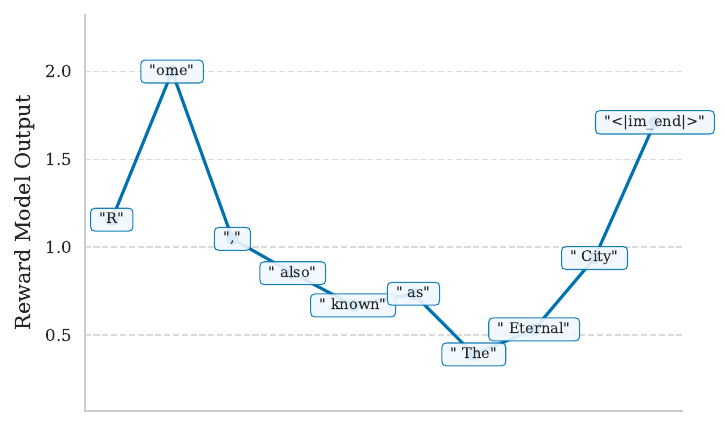} % Placeholder for your third image
        \caption{Correct response with extra suffix}
        \label{fig:rome_coherent_ok_suffix}
    \end{subfigure}
    \hfill % Adds horizontal space
    \begin{subfigure}[b]{0.48\textwidth}
        \centering
        \includegraphics[width=\linewidth]{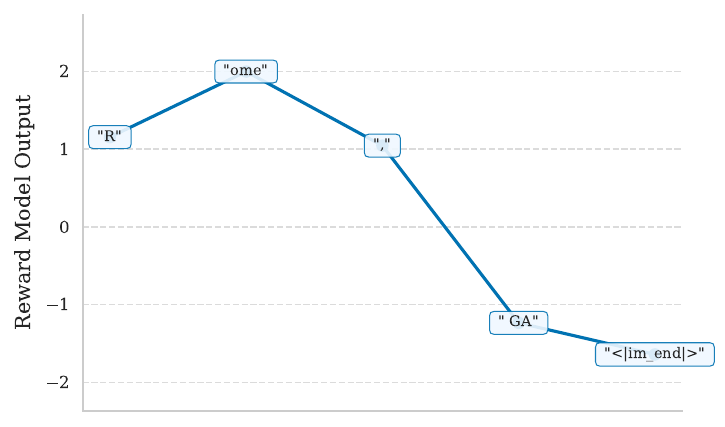} % Placeholder for your fourth image
        \caption{Wrong suffix 1}
        \label{fig:rome_coherent_wrong_suffix1}
    \end{subfigure}

    % --- Bottom Row ---
    \begin{subfigure}[b]{0.48\textwidth}
        \centering
        \includegraphics[width=\linewidth]{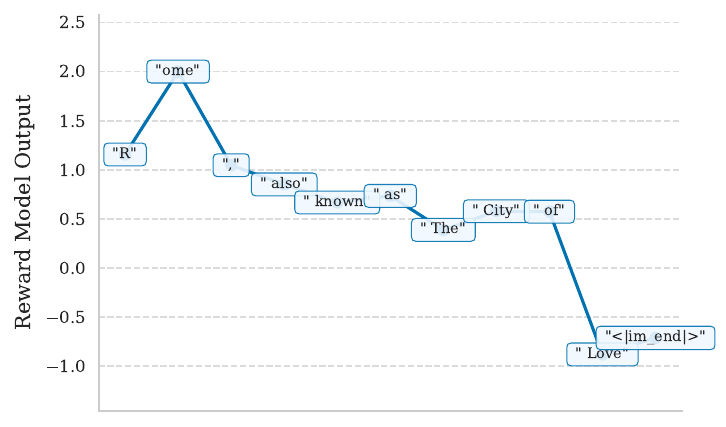} % Placeholder for your third image
        \caption{Wrong suffix 2}
        \label{fig:rome_coherent_wrong_suffix2}
    \end{subfigure}
    \hfill % Adds horizontal space
    
    \caption{Responses to "What is the capital of Italy?" scored by a temporally coherent reward model.}
    \label{fig:rome}
\end{figure}

\subsection{Multi-Step Mathematical Reasoning}

TCRM can be used to evaluate and guide multi-step mathematical reasoning, as established by strong ProcessBench metrics in Section \ref{sec:process_bench}. Here we show an illustrative example of evaluation of step-wise reasoning traces. We use a problem shown in Table \ref{tab:mat_prob}, along with the correct and wrong solutions. Note that it's easy to establish the ground truth for the first wrong step, but there's no clear definition of correctness following a wrong step - you can either follow a correct process and get the wrong numerical values, or use a wrong process and arrive at the correct numerical values.

We evaluate the correctness of steps in the solution by computing the sigmoid of the reward model score at the last token of each step. This sigmoid can be loosely interpreted as the probability of all the steps being correct so far. We expect to see the score staying relatively flat (and at high values) along the correct trajectory. But an error in a step should trigger a large reduction in the score. This is exactly what we see in Figures \ref{fig:math_steps_correct} (scores for the correct solution) and \ref{fig:math_steps_wrong} (scores for the wrong solution). For the wrong solution the scores remain flat in correct Steps 1 and 2, but drop sharply in the erroneous Step 3. Furthermore, Figure \ref{fig:math_tokens_wrong} confirms that at the token resolution our model identifies the exact tokens within Step 3 where the error is introduced.

\begin{table}[h!] % The [h!] suggests LaTeX place the table *here*
\caption{Mathematical problem and two solutions: correct and wrong. The first error in the wrong solution is highlighted in red.}
\label{tab:mat_prob}
\
\centering % Centers the table on the page
\begin{tabular}{p{0.45\textwidth} p{0.45\textwidth}}
\toprule
\multicolumn{2}{c}{\textbf{Problem}} \\
\cmidrule(lr){1-2} % A trimmed rule under the Problem header
\multicolumn{2}{c}{$x+y=3$. Also, $x$ is greater than $y$ by 1. What is $x$?} \\
\midrule
\textbf{Correct Solution} & \textbf{Wrong Solution} \\
\midrule
Let's reason step-by-step: \newline
Step 1. $x$ is greater than $y$ by 1, so $x = y + 1$. \newline
Step 2. $x+y=3$, so $y+1+y=3$, so $2y+1=3$. \newline
Step 3. $2y=2$, so $y=1$. \newline
Step 4. $x=y+1$, so $x=1+1=2$. \newline
Step 5. Therefore, $\boxed{x=2}$. &
Let's reason step-by-step: \newline
Step 1. $x$ is greater than $y$ by 1, so $x = y + 1$. \newline
Step 2. $x+y=3$, so $y+1+y=3$, so $2y+1=3$. \newline
Step 3. $2y=2$, so \textcolor{red}{$y=2$}. \newline
Step 4. $x=y+1$, so $x=2+1=3$. \newline
Step 5. Therefore, $\boxed{x=3}$. \\
\bottomrule
\end{tabular}
\end{table}

\begin{figure}
    \centering
    \includegraphics[width=0.5\linewidth]{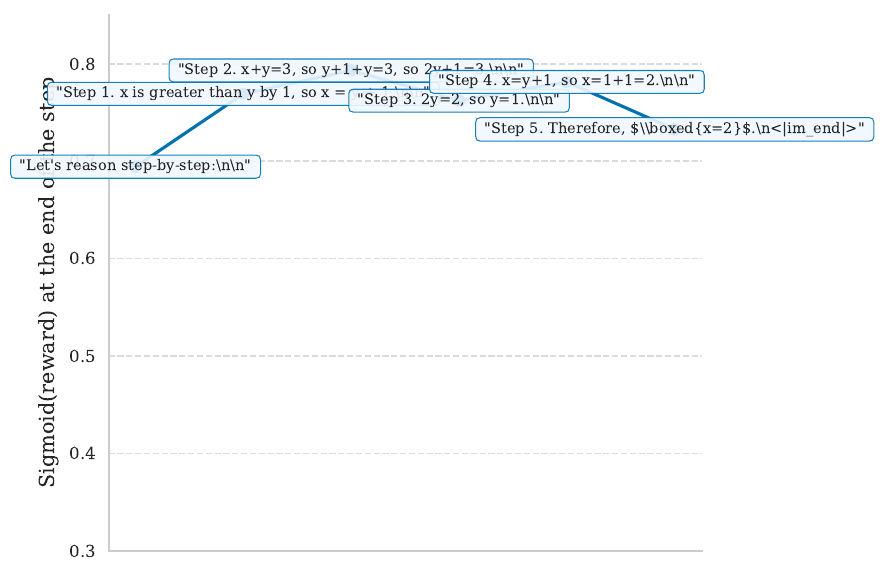}
    \caption{Step-wise reward model scores for the correct solution}
    \label{fig:math_steps_correct}
\end{figure}

\begin{figure}
    \centering
    \includegraphics[width=0.5\linewidth]{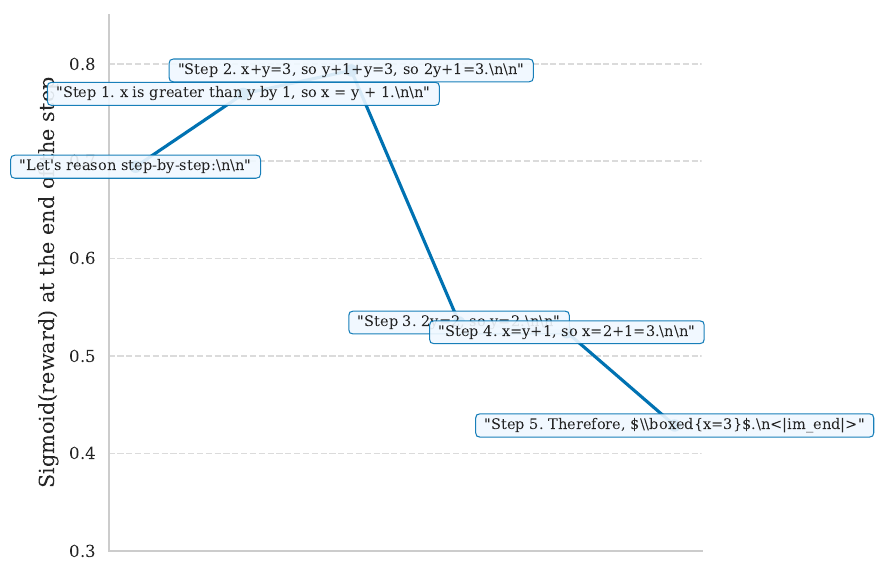}
    \caption{Step-wise reward model scores for the wrong solution}
    \label{fig:math_steps_wrong}
\end{figure}

\begin{figure}
    \centering
    \includegraphics[width=0.5\linewidth]{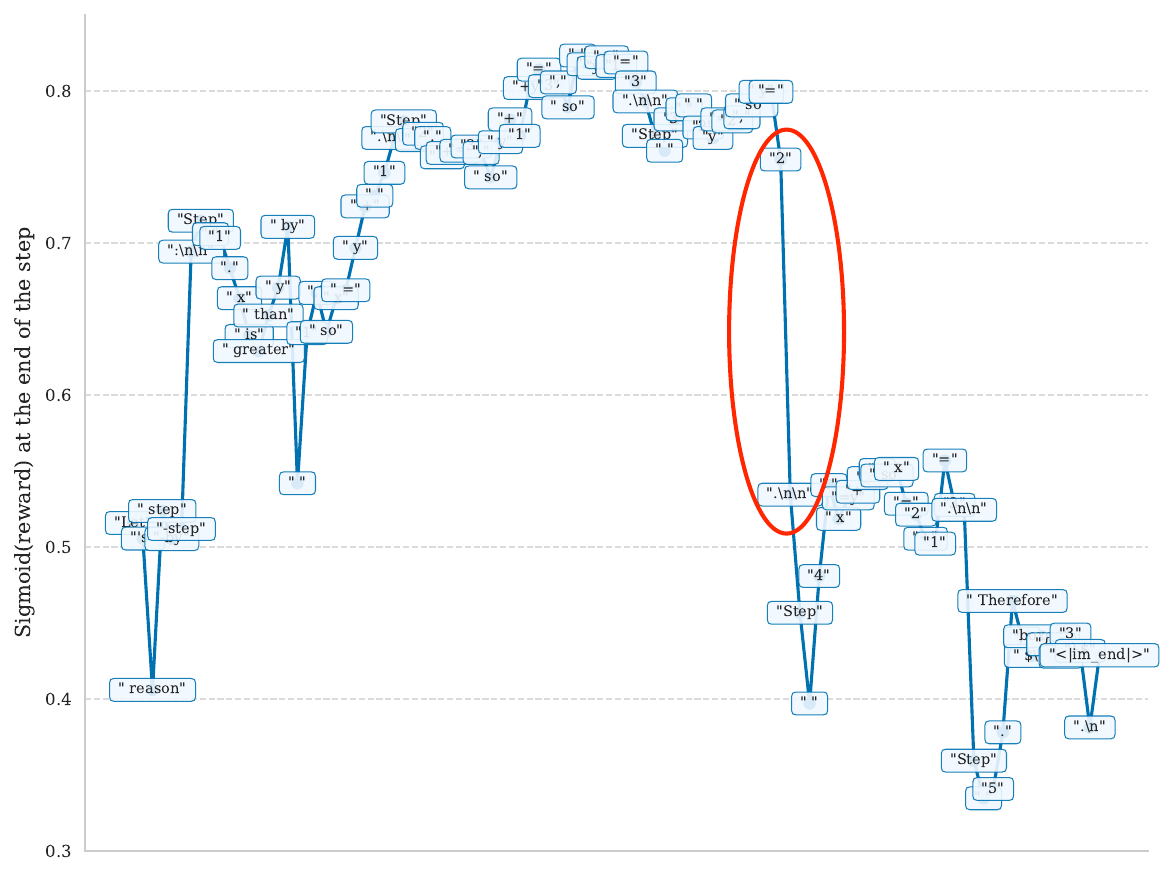}
    \caption{Token-wise reward model scores for the wrong solution}
    \label{fig:math_tokens_wrong}
\end{figure}

\section{ProcessBench Evaluation Details}
\label{sec:pb_eval_details}

Table \ref{tab:process_bench_full} provides the full ProcessBench leaderboard corresponding to the compact main-text table (Table \ref{tab:process_bench}).

\begin{table}[ht]
\centering
\caption{Full PRM evaluation results on ProcessBench. F1 scores of the respective accuracies on erroneous and correct samples are reported.}
\label{tab:process_bench_full}
\begin{tabular}{lrrrrr}
\toprule
\textbf{Model} & \textbf{GSM8K} & \textbf{MATH} & \textbf{Olympiad-Bench} & \textbf{Omni-MATH} & \textbf{Average} \\
\midrule
\multicolumn{6}{c}{Open-weight Process Reward Models (PRMs) trained with \textit{step-wise labels}} \\
\midrule
RLHFlow-Llama3.1-8B-PRM-Mistral & 50.4 & 33.4 & 13.8 & 15.8 & 28.4 \\
RLHFlow-Llama3.1-8B-PRM-Deepseek & 38.8 & 33.8 & 16.9 & 16.9 & 26.6 \\
Skywork-PRM-1.5B & 59.0 & 48.0 & 19.3 & 19.2 & 36.4 \\
Skywork-PRM-7B & 70.8 & 53.6 & 22.9 & 21.0 & 42.1 \\
Qwen2.5-Math-7B-PRM800K & 68.2 & 62.6 & 50.7 & 44.3 & 56.5 \\
Qwen2.5-Math-PRM-7B & 82.4 & 77.6 & 67.5 & 66.3 & 73.5 \\
Qwen2.5-Math-PRM-72B & \textbf{87.3} & \textbf{80.6} & \textbf{74.3} & \textbf{71.1} & \textbf{78.3} \\
\midrule
\multicolumn{6}{c}{Open-weight reward models trained with \textit{outcome labels}} \\
\midrule
Llama3.1-8B-ImplicitPRM-DPO & \textbf{72.1} & 46.0 & 28.0 & 26.7 & 43.2 \\
Llama3.1-8B-ImplicitPRM-CE & 67.6 & 46.2 & 27.5 & 29.7 & 42.8 \\
Llama3.1-8B-ImplicitPRM-DPO (ref-free) & 57.7 & 42.6 & 34.1 & \textbf{36.7} & 42.8 \\
Llama3.1-8B-ImplicitPRM-CE (ref-free) & 51.2 & 39.3 & 32.2 & 33.6 & 39.1 \\
ABC-Llama3.1-8B & 38.5 & 29.3 & 16.0 & 15.1 & 24.7 \\
\textcolor{orange!90!black}{\raisebox{0.1ex}{\large$\bigstar$}} TCRM-Llama3.1-8B (trained on Skywork data) & 30.1 & 24.6 & 21.3 & 18.0 & 23.7 \\
\textcolor{orange!90!black}{\raisebox{0.1ex}{\large$\bigstar$}} TCRM-Llama3.1-8B (trained on math data) & 68.9 & \textbf{47.7} & \textbf{34.8} & 28.3 & \textbf{44.9} \\
\bottomrule
\end{tabular}
\end{table}

For TCRM, ABC and Llama3.1-8B-ImplicitPRM we chose among several methods to classify an individual step as correct or wrong. For each model we independently chose the method and threshold value which maximized the F1 score on GSM8B subset of ProcessBench, in line with the procedure used in \cite{zheng2024processbench}. We considered the following classification methods:

\begin{enumerate}
    \item Difference. We classify the step as correct if the difference of reward model scores at the end of the step (at the token corresponding to the "\string\n \string\n" separator) minus at the end of the previous step was above a threshold. For evaluation of first step we used the score at the first token of the first step for subtraction because there was no previous step.
    \item Sigmoid difference. Similar to the previous method, but sigmoid transformation was applied to the reward scores before subtraction.
    \item Sigmoid ratio. Similar to the previous method, but we used the ratio of sigmoids instead of difference.
\end{enumerate}

Additionally, for reference-free Llama3.1-8B-ImplicitPRM we considered using a "Normalized Difference" method, where the difference between scores was divided by the number of tokens in the step, but this method did not perform well.

For Llama3.1-8B-ImplicitPRM and ABC a cumulative sum was applied to token-level predicted rewards so that we could measure the overall quality of the substring generated so far, instead of individual tokens. Since ABC is a purely inference-time method, we applied it to one of the reward models we had - TCRM-Llama3.1-8B (trained on math data).

\section{RewardBench Results}
\label{sec:rewardbench_v2}

Table \ref{tab:rewardbench_v2} summarizes RewardBench 2 \cite{malik2025rewardbench} results. TCRM consistently outperforms all baselines at larger model scales, achieving the highest overall Score for Qwen3-8B (53.2), Qwen3-32B (74.4), and Llama3.1-8B (68.1), with particularly strong gains in Safety and Focus.
While implicit reward methods like ImplicitPRM (DPO) and TC-$\lambda$ can be competitive at smaller scales (0.6B–1.7B), their performance degrades significantly on larger models---where TCRM maintains or extends its advantage over the standard baseline---suggesting that TCRM scales more effectively with model capacity.

\begin{table}[t]
\centering
\caption{RewardBench v2 results across model sizes and training methods.}
\label{tab:rewardbench_v2}
\resizebox{\textwidth}{!}{%
\begin{tabular}{ll|ccccccc}
\toprule
\textbf{Model} & \textbf{Method} & \textbf{Score} & \textbf{Factuality} & \textbf{Precise IF} & \textbf{Math} & \textbf{Safety} & \textbf{Focus} & \textbf{Ties} \\
\midrule
  \multirow{4}{*}{Qwen3-0.6B} & Baseline & 22.4 & 37.5 & \textbf{28.1} & 25.1 & 21.6 & 16.4 & 5.5 \\
   & ImplicitPRM (DPO) & \textbf{35.5} & \textbf{38.3} & 25.0 & \textbf{38.3} & 32.4 & \textbf{70.9} & 7.9 \\
   & TC-$\lambda$ & 32.8 & 31.2 & 24.4 & 35.0 & \textbf{40.2} & 54.1 & \textbf{12.0} \\
   & TCRM & 29.0 & 30.5 & 26.2 & 32.8 & 29.3 & 50.9 & 4.1 \\
\midrule
  \multirow{4}{*}{Qwen3-1.7B} & Baseline & 28.7 & 27.6 & 23.1 & 40.4 & 44.4 & 30.3 & 6.1 \\
   & ImplicitPRM (DPO) & 42.1 & 46.5 & 21.2 & 50.8 & 52.4 & \textbf{69.1} & \textbf{12.5} \\
   & TC-$\lambda$ & \textbf{44.5} & \textbf{51.0} & \textbf{33.1} & \textbf{56.8} & \textbf{66.4} & 55.0 & 4.9 \\
   & TCRM & 37.2 & 33.7 & 27.5 & 29.5 & 62.0 & 64.4 & 5.9 \\
\midrule
  \multirow{2}{*}{Qwen3-4B} & ImplicitPRM (DPO) & \textbf{51.0} & \textbf{48.8} & 29.4 & \textbf{53.0} & 67.1 & \textbf{80.2} & \textbf{27.4} \\
   & TC-$\lambda$ & 45.4 & 41.5 & \textbf{32.5} & 49.2 & \textbf{74.0} & 60.6 & 14.4 \\
\midrule
  \multirow{4}{*}{Qwen3-8B} & Baseline & 47.4 & 40.2 & 33.8 & 56.3 & 57.8 & 75.6 & 20.9 \\
   & ImplicitPRM (DPO) & 51.7 & 49.1 & 31.9 & 54.6 & 66.0 & \textbf{82.0} & \textbf{26.4} \\
   & TC-$\lambda$ & 48.5 & \textbf{58.7} & 35.0 & 55.7 & 72.7 & 47.5 & 21.1 \\
   & TCRM & \textbf{53.2} & 48.2 & \textbf{37.5} & \textbf{57.4} & \textbf{74.4} & 77.0 & 24.5 \\
\midrule
  \multirow{4}{*}{Qwen3-14B} & Baseline & 72.2 & 67.6 & 42.5 & 70.5 & 90.2 & 80.8 & \textbf{81.5} \\
   & ImplicitPRM (DPO) & 54.1 & 53.9 & 31.2 & 55.7 & 76.2 & 83.8 & 23.9 \\
   & TC-$\lambda$ & \textbf{72.6} & \textbf{67.8} & \textbf{43.1} & \textbf{70.5} & 90.9 & \textbf{85.5} & 77.8 \\
   & TCRM & 71.1 & 66.5 & 41.2 & 65.6 & \textbf{93.6} & 85.1 & 74.5 \\
\midrule
  \multirow{3}{*}{Qwen3-32B} & Baseline & 73.4 & 72.6 & \textbf{38.1} & 68.9 & \textbf{93.1} & 91.1 & 76.4 \\
   & ImplicitPRM (DPO) & 52.0 & 59.4 & 26.2 & 48.1 & 64.9 & 82.4 & 30.8 \\
   & TCRM & \textbf{74.4} & \textbf{73.7} & \textbf{38.1} & \textbf{70.5} & 92.9 & \textbf{93.9} & \textbf{77.3} \\
\midrule
  \multirow{4}{*}{Llama3.1-8B} & Baseline & 67.3 & 59.8 & 33.8 & 65.6 & \textbf{88.4} & 87.5 & 69.0 \\
   & ImplicitPRM (DPO) & 52.5 & 50.9 & 23.8 & 54.1 & 82.7 & 86.1 & 17.5 \\
   & TC-$\lambda$ & 67.6 & \textbf{61.1} & \textbf{35.0} & 59.0 & 88.2 & \textbf{89.1} & 73.1 \\
   & TCRM & \textbf{68.1} & 60.0 & 33.1 & \textbf{67.2} & 87.8 & 87.1 & \textbf{73.3} \\
\bottomrule
\end{tabular}}
\end{table}

\section{Reinforcement Learning Training and Evaluation Details}
\label{sec:rl_details}

We use the Llama 3.1 8B (Instruct) \cite{dubey2024llama3} model as the initial LLM checkpoint for PPO training. The reward models are trained on top of this LLM as well.

\subsection{Training}
\label{sec:rl_training}

The reward models (both baseline and TCRM) are trained on the pairwise dataset \textit{Skywork-Reward-Preference-80K-v0.2} \cite{liu2024skywork}. The prompts for PPO training were sourced from the \textit{Dolci-Instruct-RL} dataset, also used for Olmo 3 \cite{olmo2025olmo3} training. This dataset was split 90/10 into training and validation sets so that we could evaluate the quality of the model during training without overfitting bias.

PPO training was done using the \textit{verl} open-source framework.

The PPO hyperparameters are shown in Table \ref{tab:ppo_hyperparams}. Note that the KL divergence penalty was applied as a loss, not as a reward component. This helped ensure that TCRM can function as a value model.

\begin{table}[h]
    \centering
    \begin{tabular}{|c|c|}
        \hline
        \textbf{Hyperparameter} & \textbf{Value} \\ \hline
        Batch size & 1024 \\ \hline
        PPO mini batch size & 256 \\ \hline
        Maximum prompt length & 1024 tokens \\ \hline
        Maximum response length & 1024 tokens \\ \hline
        Number of training epochs & 1 \\ \hline
        Actor learning rate & 5e-7 \\ \hline
        Critic learning rate & 5e-6 \\ \hline
        KL divergence penalty & 1e-3, loss \\ \hline
        Optimizer & AdamW \\ \hline
        GAE $\gamma$ & 1.0 \\ \hline
        GAE $\lambda$ & 1.0 \\ \hline
        Critic warmup steps & 0 \\ \hline
    \end{tabular}
    \caption{PPO hyperparameters}
    \label{tab:ppo_hyperparams}
\end{table}

\subsection{Evaluation}
\label{sec:rlhf_eval}

For evaluation we used 2 approaches:
\begin{enumerate}
    \item Reward model scoring. This was used to evaluate the checkpoints every 10 training steps to understand training progress.
    \item LLM-as-a-Judge evaluation. This was applied only to generations from 2 final checkpoints (baseline and 1 checkpoint from PPO trained with TCRM RM/VM)
\end{enumerate}

To prevent overfitting bias and signal leakage in reward model scoring we used 2 modifications:
\begin{enumerate}
    \item Split dataset. The prompts for evaluation are also sourced from \textit{Dolci-Instruct-RL}, but we only used the $10\%$ validation subset with no overlap with the PPO training subset. A random subset of 1000 prompts from the validation set was used.
    \item Different reward model. We used the \textit{Skywork-Reward-V2-Llama-3.1-8B} \cite{liu2025skyworkv2} reward model which was trained on the \textit{Skywork-Reward-V2} dataset, which has only small overlap with the \textit{Skywork-Reward-Preference-80K-v0.2} dataset on which the reward models used in PPO were trained.
\end{enumerate}

Figure \ref{fig:llm_judge_prompt} shows the prompt used in LLM-as-a-Judge evaluation. We used Gemini 3 Pro (preview) as the judge model. To mitigate the position bias, each pair of generations was evaluated twice with different order of examples. Evaluations where the LLM's judgment wasn't consistent across different orderings of responses were reported separately in Table \ref{tab:llm_judge}. We sampled 1,000 random prompts for the LLM-as-a-Judge evaluation from the test split of \textit{Dolci-Instruct-RL}.

\begin{figure}[ht]
    \centering
    
    \framebox[\textwidth]{
      \begin{minipage}{0.9\textwidth}
        \setlength{\fboxsep}{5pt}
        You are an expert AI Evaluator. Your task is to analyze a User Query and evaluate two different AI responses (Response A and Response B) to determine which is superior.

\bigskip
\#\#\# EVALUATION CRITERIA

Evaluate based on:

1.  **Instruction Following:** The response must follow all constraints in the prompt, including negative and formatting constraints.

2.  **Accuracy:** The response must be accurate, factual and logically consistent.

3.  **Clarity:** The response must be easy to read and understand.

4.  **Completeness:** The response must fully address all aspects of the prompt.

Provide your judgement as: "A" (Response A is better), "B" (Response B is better), or "TIE" (roughly equal quality).

\bigskip

\#\#\# INPUT DATA

\bigskip

**User Query:**

<user\_query>

\{prompt\}

</user\_query>

\bigskip

**Response A:**

<response\_a>

\{responseA\}

</response\_a>

\bigskip

**Response B:**

<response\_b>

\{responseB\}

</response\_b>

\bigskip
    
    Please evaluate which response is better and provide your judgement.
      \end{minipage}
    }
    % Your original code end
    
    \caption{LLM-as-a-Judge prompt}
    \label{fig:llm_judge_prompt}
\end{figure}

\subsection{Additional Results}
\label{sec:rlhf_add_result}

We get a deeper insight into the training dynamics of PPO with TCRM-based reward and value models by analyzing the KL divergence. Figure \ref{fig:rlhf_kl} shows how KL divergence changes as training progresses and Figure \ref{fig:rlhf_training_progress_kl} shows validation reward model score at different levels of KL divergence. It is clear that PPO with TCRM-based reward and value models uses the "budget" of KL divergence more effectively, achieving higher reward model scores at the same KL divergence. It is notable that a frozen value model based on TCRM achieves substantially lower KL divergence compared to other configurations---we hypothesize that this is due to the reduction of noise from imperfect value model updates.

\begin{figure}
    \centering
    \includegraphics[width=0.75\linewidth]{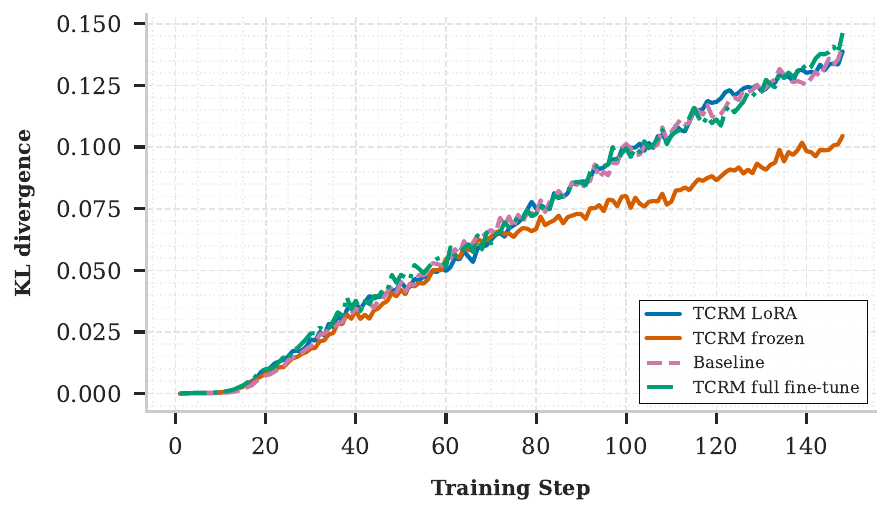}
    \caption{KL Divergence at Different RLHF Training Steps}
    \label{fig:rlhf_kl}
\end{figure}

\begin{figure}
    \centering
    \includegraphics[width=0.75\linewidth]{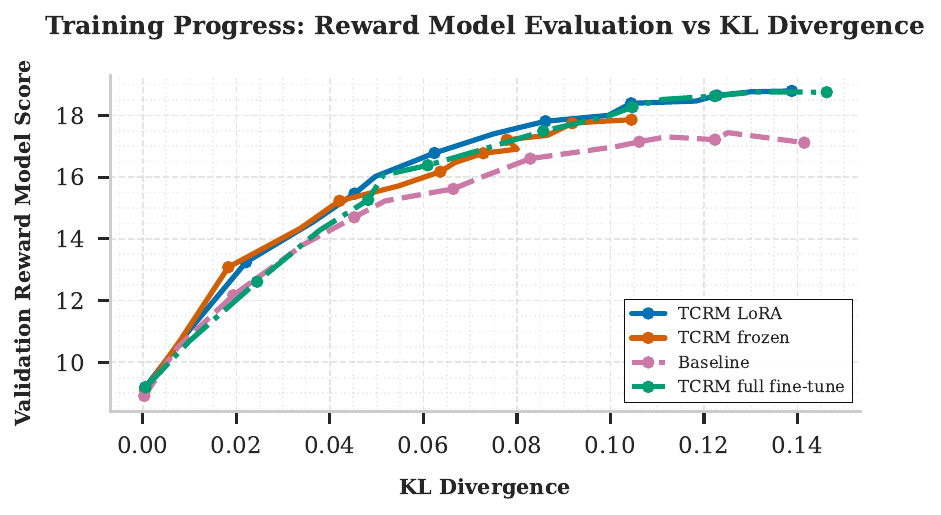}
    \caption{Validation Reward Model Score at Different KL Divergence Levels}
    \label{fig:rlhf_training_progress_kl}
\end{figure}

Initializing a value model from TCRM provides a "warm start" benefit since the value model is accurate from the very first step. We could try to get a similar benefit by freezing the policy LLM until the value model accuracy improves. Figure \ref{fig:rlhf_frozen_critic} shows the result of freezing the policy LLM for 25 steps (this is how long it takes for a value model initialized from the baseline RM to reach the loss similar to the loss of TCRM-based value model). The LLM trained with a frozen value model clearly underperforms the LLMs trained with unfrozen policy LLMs because the first 25 steps essentially waste the data on which the policy LLM could be improved.

\begin{figure}
    \centering
    \includegraphics[width=0.75\linewidth]{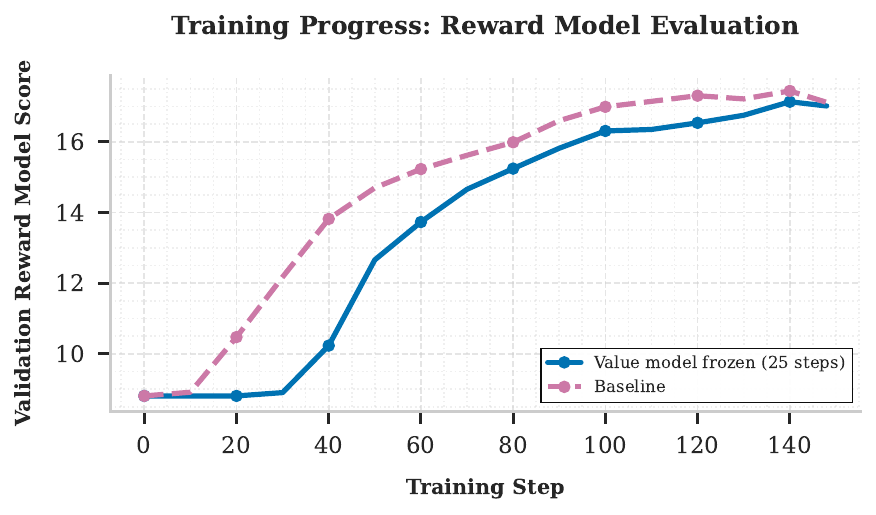}
    \caption{Validation reward model scores for PPO training runs with and without freezing the value model for first 25 steps.}
    \label{fig:rlhf_frozen_critic}
\end{figure}

Using TCRM as a value model for PPO creates an opportunity to optimize resource usage (GPU memory and compute). For most extreme savings we can use a frozen TCRM as both the reward model and the value model, removing the need for a separate value model altogether. Table \ref{tab:tcrm_resource} below shows GPU memory use and step timings for a training setup with 8 H100 GPUs on a single node, using Llama 3.1 8B for policy, reward model and value model. LoRA uses rank-16 adapters. It is important to note that our implementation of memory-efficient TCRM-based PPO is not fully optimized and significant opportunities remain to improve the computational efficiency of PPO with TCRM-based frozen or LoRA value model. With our research implementation we measure a reduction (using frozen TCRM as a value model) in peak GPU memory use of -27\% and a reduction in training step time of -19\%. The LoRA version was not implemented most efficiently, resulting in an increased value model update timing.

\begin{table}[h]
    \centering
    \begin{tabular}{|c|c|c|c|}
        \hline
          & \textbf{Regular PPO} & \textbf{\textcolor{orange!90!black}{\raisebox{0.1ex}{\large$\bigstar$}} TCRM LoRA} & \textbf{\textcolor{orange!90!black}{\raisebox{0.1ex}{\large$\bigstar$}} TCRM Frozen VM} \\ \hline
        Peak GPU allocated memory (GB) & 75.1 & 54.9 (-27\%) & 54.8 (-27\%) \\ \hline
        Training step time (s) & 66.4 & 67.7 (+2\%) & 54.1 (-19\%) \\ \hline
        Value model update time (s) & 12.4 & 13.0 (+5\%) & 0.0 (-100\%)\\ \hline
    \end{tabular}
    \caption{Resource Requirement Comparison: Regular PPO vs TCRM-based Efficient Implementation}
    \label{tab:tcrm_resource}
\end{table}

\end{document}